\newcommand{\NN}{N}
\newcommand{\x}{\mathbf{x}}
\newcommand{\y}{\mathbf{y}}
\newcommand{\A}{\mathcal{A}}
\newcommand{\K}{\mathcal{K}}
\newcommand{\s}{\mathbf{s}}
\renewcommand{\emph}[1]{\textit{#1}}
\newcommand{\PrClass}{\mathcal{D}}
\newcommand{\Pars}{\mathbf{p}}
\newcommand{\parSpace}{\mathcal{P}}
\newtheorem{definition}{Definition}
\newtheorem{assumption}{Assumption}
\newtheorem{corollary}{Corollary}
\newtheorem{proposition}{Proposition}
\newtheorem{thm}{Theorem}
\newtheorem{remark}{Remark}
\title{\vspace{-2.5cm}Zero-shot Generalization in Inventory Management:\\ Train, then Estimate and Decide}
\date{} 
\author[1]{Tarkan Temizöz\thanks{Corresponding author: \texttt{t.temizoz@tue.nl}}}
\author[1]{Christina Imdahl}
\author[1]{Remco Dijkman}
\author[1,2]{Douniel Lamghari-Idrissi}
\author[1]{Willem van Jaarsveld}
\affil[1]{Department of Industrial Engineering and Innovation Sciences, Eindhoven University of Technology, PO Box 513, Eindhoven 5600 MB, Netherlands
}
\affil[2]{ASML, 5504 DR, Veldhoven, The Netherlands}
\begin{document}

\maketitle
\vspace{-1.0cm}

\begin{adjustwidth}{-0.8cm}{-0.8cm}
\begin{abstract}
Deploying deep reinforcement learning (DRL) in real-world inventory management presents challenges, including dynamic environments and uncertain problem parameters, e.g. demand and lead time distributions. These challenges highlight a research gap, suggesting a need for a unifying framework to model and solve sequential decision-making under parameter uncertainty. We address this by exploring an underexplored area of DRL for inventory management: training generally capable agents (GCAs) under zero-shot generalization (ZSG). Here, GCAs are advanced DRL policies designed to handle a broad range of sampled problem instances with diverse inventory challenges. ZSG refers to the ability to successfully apply learned policies to unseen instances with unknown parameters without retraining.

We propose a unifying Super-Markov Decision Process formulation and the Train, then Estimate and Decide (TED) framework to train and deploy a GCA tailored to inventory management applications. The TED framework consists of three phases: training a GCA on varied problem instances, continuously estimating problem parameters during deployment, and making decisions based on these estimates. Applied to periodic review inventory problems with lost sales, cyclic demand patterns, and stochastic lead times, our trained agent, Generally Capable Lost Sales Network (GC-LSN) consistently outperforms well-known traditional policies when problem parameters are known. Moreover, under conditions where demand and/or lead time distributions are initially unknown and must be estimated, we benchmark against online learning methods that provide worst-case performance guarantees. Our GC-LSN policy, paired with the Kaplan-Meier estimator, is demonstrated to complement these methods by providing superior empirical performance.  
\noindent  \\
Keywords: Inventory control, deep reinforcement learning, forecasting, zero-shot generalization.
\end{abstract}
\end{adjustwidth}

\vspace{-0.9cm}
\section{Introduction}\label{sec:intro}
\vspace{-0.35cm}

The digital revolution, characterized by unprecedented availability of historical data and advances in computational capabilities, has led to a paradigm shift in operations management research toward data-driven methodologies \citep{misic2020}. For example, in inventory management, where simple and well-analyzed heuristics are widely adopted, there is a growing interest in developing and applying machine learning (ML) \citep[][]{qi2023deep} and deep reinforcement learning (DRL) algorithms \citep[][]{gijsbrechts2022can}. This evolution promises more sophisticated solutions to complex inventory practices where heuristics are still the norm \citep{boute2021deep}. 

However, deploying data-driven ML applications in real-life operations involving sequential decision-making presents significant challenges. One major challenge is the unpredictable, dynamic and constantly changing nature of operational environments \citep{amiri2023adaptive, Cheung2023nonstationary}, requiring trained policies to be robust and adaptable to similar yet unseen conditions during deployment \citep[cf.][]{Kirk2023SurveyZSGDRL}. For instance, global industries frequently encounter shifts in demand patterns and supply uncertainties, highlighting the limitations of relying on swathes of historical data and algorithms trained on them \citep{Gong2023}. 

A related challenge involves the decision-makers' (DMs) often incomplete understanding of the problem environment, henceforth referred to as the problem parameters or \emph{parameterization}, that influence state transitions. Here, the system's \emph{state} \cite[cf.][]{puterman2014markov} refers to the current status of all relevant variables at a given point in time, and \emph{state transitions} are the result of actions taken, leading to a new state. (For example, in the lost sales inventory system \cite[cf.][]{zipkin2008old}, the \emph{state transitions} correspond to the changes to on-hand and pipeline inventory and costs incurred in each period when a certain demand occurs and a certain order is placed. The problem \emph{parameters} influence state transition probabilities and costs, and include the applicable demand and lead time distributions, the holding cost and penalty costs.) 

Clearly, the optimal actions in a problem depend on the problem parameters. Thus, for effective deployment of ML/DRL algorithms in real-world settings, it is crucial to estimate and adapt to the true problem parameters. Unpredictable changes, such as shifts in demand distributions and the introduction of new products, can limit the applicability of historical data. Additionally, DMs may be constrained in accurately inferring the true parameters when demand is censored by stockouts and only sales data are observed \citep{huh2011adaptive}. Consequently, DMs are often compelled to re-estimate problem parameters based on recent observations, and such estimates must be frequently updated to avoid decision making based on outdated information.

In most applications of ML/DRL to inventory management, policies are trained for each specific set of parameters (or \emph{instance}) of the decision problem \citep[see e.g.][]{oroojlooyjadid2021deep,gijsbrechts2022can,maichle2024general,wang2024}. However, when deploying such models in a practical setting, the need to remodel and retrain a policy whenever the estimated problem parameters are updated represents a significant computational challenge. This challenge becomes even more pressing when considering that DMs typically need to make decisions across many tasks in each decision period, such as managing replenishments for thousands of products, and that training a single model can take hours \cite[see][]{gijsbrechts2022can}. 

These challenges reveal a significant research gap, indicating the need for a unifying framework to model and solve the sequential decision-making problems when problem parameters are uncertain and frequently updated. In this paper, we address this challenge by exploring an underexplored area of DRL for inventory management problems: training generally capable agents (GCAs) under zero-shot generalization (ZSG). In our context, \emph{GCAs} refer to advanced DRL policies that perform well across a broad range of sampled problem instances with diverse inventory challenges while \emph{ZSG} means that a GCA can be directly applied to new instances with unknown parameters that it was never explicitly trained on. Motivated by the desire to bring DRL applications closer to practical inventory applications, we propose a unifying MDP formulation and a solution framework to train and deploy a GCA tailored to inventory management problems. In particular, our contributions can be summarized as follows.

Firstly, to formalize the problem setting, we introduce a Super-Markov Decision Process (Super-MDP) representation for a class of inventory problems. Classical Markov Decision Processes (MDPs) \citep{puterman2014markov,Porteus2002} are traditionally used to model individual problem instances in inventory management. In contrast, a Super-MDP encompasses all MDP instances that can potentially arise from specific parameterizations of this class of problems, treating each instance as a particular member of a broader population of related MDPs. This population-based representation captures the DM’s sequential decision-making under uncertainty about which problem parameters apply.

Moreover, we propose the \emph{Train, then Estimate and Decide (TED)} framework to train GCAs based on the Super-MDP formulation and to enable their deployment in unknown environments. To \emph{train} a GCA, we observe that while problem parameters such as true demand and lead time distributions are unknown in typical inventory applications, DMs possess domain knowledge that typically allows them to determine a (possibly wide) \emph{range} in which the true problem parameters are expected to reside. We later call this range \emph{probable parameter space}. As a result, we can formulate a Super-MDP tailored to the resulting class of problem instances that the DM needs to address. To effectively \emph{train} a GCA for this Super-MDP, we sample problem instances from that class, and train a policy using DRL to select actions depending on the dynamic problem state as well as the problem parameters, i.e. we learn a \emph{single} policy that is applicable for all instances in the class. During the \emph{Estimate} phase, we continuously collect data while simultaneously estimating problem parameters, thus reducing the uncertainty caused by the policies' incomplete knowledge of the environment. For the \emph{Decide} phase, we note that a given set of problem parameter estimates yields a specific instance of an inventory problem, and we take decisions for that instance by deploying the trained GCA based on these parameter estimates without retraining (ZSG). 

The TED framework builds upon a rich tradition in \emph{classical} inventory management, where problem specific policies (e.g. base-stock policies, $(s,S)$ policies, capped base-stock policies etc.) are optimized based on \emph{estimated} problem parameters \citep[e.g. forecasted demand distribution, see][]{Silver2016,Axsater2015InventoryControl}, and the resulting decisions are deployed in practice. Training new DRL policies whenever parameter estimates are updated is prohibitively time-consuming especially in the case of many products, and the TED framework overcomes this limitation by training a \emph{single} policy (our GCA) for a wide range of parameters, and as such it is readily applicable without retraining after updating parameter estimates, enabling data-driven decision making for complex decision problems. 

To demonstrate the effectiveness of our TED framework, we apply it to a class of inventory problems involving periodic review, and further characterized by lost sales, and possibly by cyclic demand patterns and stochastic lead times. For this class of inventory problems, we train a GCA during the Train phase, named \emph{Generally Capable Lost Sales Network (GC-LSN)}. To validate it, GC-LSN is rigorously benchmarked versus the base-stock and capped base-stock policies on a wide range of problems assuming full availability of the parameters of each problem instance, both for GC-LSN and for the benchmarks. We find that GC-LSN consistently outperforms these well-performing benchmarks. Moreover, as part of the TED framework, we test GC-LSN under conditions where demand and lead time distributions are initially \emph{unknown} and must be estimated. For the Estimate phase, we adopt the non-parametric Kaplan-Meier estimator \citep{kaplan1958nonparametric} to estimate the demand distribution and construct a relative frequency distribution for the lead time distribution. Our experiments show that TED outperforms a range of benchmarks specifically designed for online learning \citep{lyu2024, Agrawal2022regret, zhang2020}. To our knowledge, TED stands uniquely as the only general-purpose algorithmic framework capable of addressing these diverse inventory challenges collectively, particularly when key information such as demand and lead time distributions are initially unknown and when data is limited and censored.

We make the implementation code of TED publicly available, including the trained model weights of our generally capable agent GC-LSN. This can be accessed through our GitHub repository\footnote{ \href{https://github.com/tarkantemizoz/DynaPlex}{https://github.com/tarkantemizoz/DynaPlex}}. Our objective in making the code and the trained agent available is twofold: academically, to support the principles of open science and enhance the reproducibility of reinforcement learning algorithms, as emphasized by \citet{Henderson_Islam_Bachman_Pineau_Precup_Meger_2018}; and managerially, to provide a state-of-the-art, ready-to-deploy lost sales inventory policy; practitioners facing similar inventory challenges may find this resource valuable.

This paper is organized as follows: \S\ref{sec:literature} reviews the literature, \S\ref{sec:prelim} provides preliminaries. We formulate the problem description in \S\ref{sec:prob} while \S\ref{sec:ted} introduces the TED framework. \S\ref{sec:inventorymodel} discusses the periodic review inventory control problem. Experimental setup and numerical results are presented in \S\ref{sec:numex}. \S\ref{sec:discussion} provides discussion and ablation experiments while \S\ref{sec:conclusion} concludes the paper.

\vspace{-0.35cm}
\section{Literature Review}\label{sec:literature}
\vspace{-0.35cm}

Our research intersects with three expanding bodies of literature. Initially, we explore the concept of training GCAs under \emph{ZSG}, the fundamental approach of this paper. Subsequently, we examine the prevalent ideas in \emph{online learning for inventory systems}, which represent the conventional methodology for addressing the leading application of this study: periodic review inventory control under conditions of lost sales with limited and censored data. Finally, we review \emph{ML for operations management}, specifically DRL, which serves as the main methodological tool within our TED framework for training GCAs.

\vspace{-0.35cm}
\subsection{Zero-shot generalization for reinforcement learning}
\vspace{-0.35cm}

ZSG refers to the ability of a trained model to perform well on tasks or parameter settings not encountered during training, without any task-specific fine-tuning. In supervised or unsupervised learning, this typically involves producing accurate predictions or representations for previously unseen classes or domains. In RL, however, ZSG entails training an agent on a set of source environments, each with its own dynamics and reward structure, and requiring it to execute effective policies in previously unseen target environments \emph{from the first decision step onward}, without further learning. This setting introduces challenges that do not arise in supervised or unsupervised learning: (\emph{i}) the sequential nature of decision-making, so early actions can create long-term consequences; and (\emph{ii}) key characteristics of the environment may be unknown or censored.

Following the taxonomy of \citet{Kirk2023SurveyZSGDRL}, we address the \emph{iid} ZSG setting, where training and testing environments are drawn from the same distribution, preparing agents for robust performance across diverse yet distributionally similar instances. Our work falls into the category where environment parameters are not directly observable and must be inferred from interactions, as in \citet{sodhani2022block}, but differs in both domain and structure: we focus on inventory management problems, where parameter estimation and mapping to problem instances are well-defined and relatively low-dimensional. This focus allows us to develop scalable, high-performing policies through our TED framework and Super-MDP formulation, enabling agents to act effectively across a wide range of parameterized problem instances without retraining.

Within the ZSG-RL literature, a common formalism is the \emph{Contextual Markov Decision Process} (CMDP) \citep{hallak2015contextual}, in which a context variable induces variation in the system’s dynamics and rewards. Our Super-MDP formulation can be viewed as a CMDP equipped with additional structure tailored to operations settings. In our case, the “context’’ is an interpretable parameter vector (e.g., demand, lead-time, and cost parameters), and each parameter vector defines a distinct MDP instance. The DM must manage a population of such instances, consisting of hundreds of products with heterogeneous characteristics. Importantly, we later endow the underlying parameter space with bounded ranges using domain knowledge and a task-specific metric, which together yield a “probable’’ parameter space and support a Lipschitz analysis of zero-shot generalization across instances. We refer to this population-based, metric CMDP view as a Super-MDP; it aligns naturally with inventory management and forms the basis for the theoretical and empirical developments in this paper.

Despite its importance, ZSG in inventory management remains relatively underexplored. Notable early work bridging this gap includes \citet{batsis2024}, \citet{akkerman2024solvingdualsourcingproblems}, \citet{vanderHaar2023}, and \citet{dehaybe2024deep} which suggest potential for developing GCAs in this area. A key distinction of our work from those applications in inventory management is the introduction of the unifying Super-MDP formulation of the problem instances, which enables modeling cases where true problem parameters are initially unknown. This uncertainty necessitates the Estimate phase in our TED framework, allowing for real-time parameter estimation and policy adaptation. In contrast, prior applications assume that all true parameters are known in advance.

\vspace{-0.35cm}
\subsection{Online learning for inventory systems} \label{sec:onlinelearninglit}
\vspace{-0.35cm}

In real-life scenarios, DMs frequently encounter unknown dynamics in demand and supply. As a result, a significant segment of literature has explored online learning strategies tailored to managing lost sales inventory control with non-zero lead times. This body of work has primarily focused on optimizing policy parameters (e.g. base-stock levels) in online learning, and has recently achieved breakthroughs in the form of worst-case regret bounds for parameter optimization within various heuristic policy classes, including results for base-stock policies \citep{Agrawal2022regret}, capped-base stock policies \citep{lyu2024}, and constant order policies \citep{chen2023supply}. Of course, adoption of heuristics limits decision makers to the performance of those heuristics for various cases. For instance, constant order policies perform poorly when the long-run average cost does not remain insensitive to lead times \citep{bai2023asymptotic}, while appropriate heuristics for the case of cyclic demand patterns are missing \citep{Gong2023}. 

Our study introduces a new approach to real-time decision making in which the policy is learned offline and applied without further learning during execution. Indeed, our GC-LSN is trained prior to being confronted with any specific instance: we adopt a unified Super-MDP formulation and the TED framework for training GCAs. Within TED, we propose to subsequently feed the trained policy network with a relatively straightforward parameter estimation procedure (i.e., based on the Kaplan-Meier estimator \citep{kaplan1958nonparametric}), and focus on empirical performance in numerical experiments. Our approach is demonstrated to complement the online learning literature with a method that performs very well empirically under a wide range of assumptions. The resulting general policy provides a unique tool for benchmarking a variety of online inventory control strategies for a broad range of problems, both in cases where there are benchmarks with worst-case performance guarantees available, and in cases where such benchmarks are not yet available. 

\vspace{-0.35cm}
\subsection{Machine learning in operations management}
\vspace{-0.35cm}

ML has extensive applications across various fields of operations management, including supply chain management, revenue management, and healthcare \citep{misic2020}. Typically, ML approaches adhere to the Predict-Then-Optimize (PTO) paradigm, where ML models forecast future scenarios, such as customer demands, which are then used to inform optimization decisions \citep{ElmachtoubGrigas2021}. However, forecasting becomes particularly challenging in environments where the future is highly stochastic and decisions have far-reaching consequences \citep{sinclair2023}.

An alternative to the PTO paradigm is DRL, which seeks to optimize decisions directly \citep{boute2021deep}. DRL has been effectively applied in various domains, including multi-echelon supply chains \citep{harsha2021math, oroojlooyjadid2021deep}, lost sales inventory control \citep{gijsbrechts2022can, temizoz2025deep}, and large-scale online retail operations \citep{madeka2022deep, andaz2024learning, Liu2023ai}. Nevertheless, the effectiveness of DRL often hinges on the availability of accurate uncertainty distributions \citep[see, e.g.,][]{temizoz2025deep} or a substantial dataset of historical transactions that are applicable for the task to be learned \citep[see, e.g.,][]{madeka2022deep}. In scenarios where data is limited and/or censored, and in cases where problem parameters are not fully observable, the utility of the previous DRL applications may diminish. 

The TED framework aims to fill this gap by expanding the application of DRL to include settings with scarce and censored data, thereby broadening the utility of these algorithms in practical operations management.

\vspace{-0.35cm}
\section{Preliminaries}\label{sec:prelim}
\vspace{-0.35cm}

This section introduces the fundamental concepts and notation necessary to define the problem setting and the TED framework presented in the subsequent sections. We begin with an introduction to MDPs, a common modeling framework for sequential decision-making \citep{puterman2014markov}.
\begin{definition}[Markov Decision Process]\label{def:mdp}
A Markov Decision Process is represented by the tuple $\mathcal{M} = \langle \mathcal{S}, \mathcal{A}, f, C, \s_{0} \rangle$. Here, $\mathcal{S}$ and $\mathcal{A}$ are finite sets of states and actions, respectively. The transition function $f: \mathcal{S} \times \mathcal{A} \to \mathcal{S}$ maps each state-action pair $(\s, a)$ to a subsequent state $\s'$. The cost function $C: \mathcal{S} \times \mathcal{A} \to \mathbb{R}$ assigns a non-negative and bounded cost $c = C(\s, a)$ to each state-action combination, while the initial state $\s_{0}$ is either deterministically or randomly set.
\end{definition}

Many common operational problems can be modeled as MDPs. For instance, consider a single-item periodic-review lost sales inventory control problem with zero lead time. In this system, the current inventory position represents the state, the order decision constitutes the action, and state transitions are determined by external demand realizations and the chosen actions. However, the dynamics and outcomes of such systems are governed by external parameters, which are typically assumed to be static and fixed within the MDP framework. In this context, the dynamics and outcomes are influenced by the demand distribution, the lead time distribution, the holding cost, and the penalty cost for unmet demand. We refer to all these factors as the parameterization of the MDP, or more generally, as the parameterization of the decision problem:
\begin{definition}[Parameterization and Parameter Space]
The \textbf{parameterization} of a problem consists of any structured exogenous parameters that completely determine the system's dynamics and outcomes. The parameter space is the set of all possible parameterizations of a decision problem, denoted as $\parSpace$. It comprises each combination of parameters $\Pars$, with $\Pars \in \parSpace$. 
\end{definition}

The parameters may take many forms, including fixed numerical constants (penalty and holding costs) and the probability distributions of exogenous stochastic processes (demand and lead time distributions). They can either be directly observed or estimated based on data. For clarity, we denote an MDP associated with a specific parameterization as $\mathcal{M}^{\Pars} = \langle \mathcal{S}^{\Pars}, \mathcal{A}^{\Pars}, f^{\Pars}, C^{\Pars}, \s_{0}^{\Pars} \rangle$ and refer to such an MDP as a \emph{problem instance}. To simplify notation, we often omit explicit references to this parameter dependence, even though every MDP is inherently tied to a specific parameterization.

A policy $\pi^\Pars$ for an MDP $\mathcal{M}^\Pars$ with parameterization $\Pars$ is defined as a function that maps states to actions, i.e., $\pi^\Pars: \mathcal{S}^\Pars \to \mathcal{A}^\Pars$. The optimal policy $\pi^{\Pars*}$ for $\mathcal{M}^\Pars$ minimizes the undiscounted average cost over an infinite horizon, which aligns with steady-state evaluation in classical inventory management \citep[see][]{puterman2014markov}:
\[
\pi^{\Pars*} = \arg\min_{\pi^\Pars} \limsup_{T \to \infty} \frac{1}{T} \mathbb{E} \left[ \sum_{t=0}^{T-1} C^\Pars(\s_t, \pi^\Pars(\s_t)) \right],
\]
where $\s_t$ is the state at time $t$ and $\pi^\Pars(\s_t)$ is the action taken in state $\s_t$ according to policy $\pi^\Pars$. The expectation $\mathbb{E}$ accounts for the stochastic processes governing the state transitions. Throughout this paper, finding policy $\pi^{\Pars*}$ for $\mathcal{M}^\Pars$ is referred to as solving a task with a parameterization $\Pars$. While the solution methodologies for finding $\pi^{\Pars*}$ are vast \citep{puterman2014markov}, we are interested in finding a single policy (a GCA $\pi_S$, such that $\pi_S: \mathcal{S} \times \parSpace \to \mathcal{A}$) that is robust to changes in $\Pars$, and especially one that can perform well when $\Pars$ is unknown. 

\vspace{-0.35cm}
\section{Problem Description}\label{sec:prob}
\vspace{-0.35cm}

In this section, we formally define the problem setting of this paper. Let $\PrClass$ represent a class of sequential decision-making problems, influenced by a parameter space $\parSpace$. Within $\PrClass$, $n$ independent and distinct tasks must be managed, each requiring the solution of an MDP ($\mathcal{M}^\Pars$) tailored to its respective parameterization $\Pars \in \parSpace$. These parameters may be subject to external factors and can change over time. For instance, replenishment decisions may need to be made for multiple products over a designated time period, where the demand and/or lead time distributions can be censored and non-stationary. Consequently, three common challenges (decision contexts) may arise when managing the sequential decision-making problems within $\PrClass$:

\begin{itemize}
\item \textit{(Scalability-1)} Decisions must be made for $n$ independent tasks, each characterized by different parameterizations. This necessitates solving at least $n$ distinct MDPs, each tailored to its specific parameterization, and employing $n$ different decision-making policies. As $n$ increases, this approach becomes time-consuming and impractical.

\item \textit{(Non-stationarity-2)} The parameters within a specific task may evolve over time, requiring the solution of new MDPs for updated parameter combinations. For example, an increase in product demand could render the existing replenishment strategy ineffective, or new tasks with unknown parameterizations may emerge through the introduction of new products. Here, a parameterization may itself encode nonstationary dynamics (e.g., cyclic demand), and its numerical values may subsequently evolve to a different element of $\parSpace$. Thus, nonstationarity as a decision context refers to the possibility that the true process, and therefore the effective MDP, changes over time.

\item \textit{(Obscurity-3)} The true parameters may not be directly observable and must instead be inferred from potentially limited and censored real-time data. For instance, while the actual demand distribution for a product remains unknown, an empirical distribution constructed from observed sales data might be used to inform replenishment decisions. As more sales data becomes available, estimates of the distribution must be updated, requiring the MDPs to be solved with respect to estimated parameters.

\end{itemize}

Across these three decision contexts, managing many tasks, adapting to evolving parameters, and operating under incomplete information, the underlying challenge is that new or updated parameterizations are repeatedly encountered. Addressing each case by retraining a separate policy for every resulting MDP $\mathcal{M}^\Pars$ is computationally expensive and often impractical. This underscores the importance of developing a single GCA with ZSG capability, enabling effective decision-making without retraining whenever parameter estimates change. We therefore begin by specifying the rationale that governs how decisions are made at each decision epoch.

\begin{assumption}[Optimization Rationale]\label{as:1} 
We do not know when or how the parameterization will change, and cannot utilize probabilistic information to anticipate and optimize for potential parameter changes accordingly. As a result, we optimize the policy assuming the stationarity of the problem parameters, pursuing optimal decision-making as long as the problem's parameters remain unchanged.
\end{assumption}

Assumption \ref{as:1} enables the decomposition of the problem into a collection of independent and stationary MDPs, each characterized by a distinct parameterization.\footnote{Even if a parameterization induces nonstationary dynamics, it can still define a stationary (time-homogeneous) MDP when the nonstationarity is structured and incorporated into the state space \citep[Sec.~2.3]{kallenberg2011markov}.} This approach is common in inventory management, where optimization strategies are typically based on current estimates of problem parameters \citep{Axsater2015InventoryControl}. In fact, it is akin to a rolling horizon strategy: the parameterization is assumed to be stationary for the time being, while at each time step, changes in parameters are monitored, and values are potentially re-estimated if they are unknown.

The ability to decompose the problem $\PrClass$ into independent and stationary MDPs motivates our definition of the Super-MDP. The Super-MDP can be understood as a population of all MDPs related to our decision problem $\mathcal{D}$ and thus formally defines the problem $\PrClass$ as follows:

\begin{definition}[Super-Markov Decision Process] A Super-Markov Decision Process is defined by the tuple $\mathcal{M}_S = (\parSpace, \mathcal{S}, \mathcal{A}, \mathcal{H}, \mathcal{F})$, where: \begin{itemize} \item $\parSpace$ represents the parameter space, containing the true problem parameterizations $\Pars \in \parSpace$. \item $\mathcal{S}$ denotes the finite state space, encompassing all possible states $\s \in \mathcal{S}$. \item $\mathcal{A}$ indicates the finite action space, where each action $a \in \mathcal{A}$, and $\mathcal{A} = \{0, 1, \ldots, m\}$. \item $\mathcal{H}$ is distribution over the parameter space $\parSpace$, generating problem parameterizations $\Pars \sim \mathcal{H}$. \item $\mathcal{F}$ is the mapping function that relates each parameterization $\Pars$ to the corresponding elements of $\mathcal{M}^{\Pars}$ (i.e., $f^{\Pars}$, $C^{\Pars}$, $\s_{0}^{\Pars}$, Definition \ref{def:mdp}). Both the state space and action space are defined universally across parameterizations, such that $\mathcal{S}^\Pars \subseteq \mathcal{S}$ and $\mathcal{A}^\Pars \subseteq \mathcal{A}$. \end{itemize} \end{definition}

To operationalize the Super–MDP, we next clarify which aspects of the parameterization are known and which remain unknown. We formalize this distinction with the following assumption.
\begin{assumption}[Structural knowledge of the parameterization and compact parameter space]\label{as:paramspace}
We know the identity and structural role of all components that constitute the parameterization $\Pars \in \parSpace$ of the problem class (i.e., there are no hidden parameters), while the true numerical values of the parameters may be unknown and time-varying. Moreover,  sufficient domain knowledge exists to specify a priori bounds for each component.
\end{assumption}
This assumption is reasonable because the operational environment is typically well understood at a higher level, parameters are often naturally bounded (e.g., non-negativity of demand), and such assumptions are commonly employed in the online learning literature \citep[see][and the references therein]{lyu2024}. We later define and utilize a probable parameterization space and establish a parameter sampling function to conceptualize the Super-MDP.

The definition of the Super-MDP aligns with the objective of training a GCA. If a policy is a GCA, it should generate effective actions for any problem instance generated by $\mathcal{H}$ from the parameter space. A policy for a Super-MDP, denoted as $\pi_S$, can be defined as a function from state-parameterization pairs to actions, $\pi_S: \mathcal{S} \times \parSpace \to \mathcal{A}$. Our objective is to identify a jointly optimal policy $\pi^*_S$, defined as:
\[
\pi_S^* = \arg\min_{\pi} \mathbb{E}_{\Pars \sim {\mathcal{H}}}\left[ \limsup_{T \to \infty} \frac{1}{T} \mathbb{E} \left[ \sum_{t=0}^{T-1} C^{\Pars}(\s_t, \pi(\s_t,\Pars)) \right]\right].
\]

Since finding the optimal policy for even moderately sized MDPs is intractable, we must rely on approximation methods. Let $\bar{C}_\pi$ denote the expected per-period costs of policy $\pi_S$ for a Super-MDP over parameterizations $\Pars \in \parSpace$ with $\Pars \sim \mathcal{H}$. With a GCA, we aim to achieve an approximately optimal policy $\hat{\pi}_S$ such that $\bar{C}{\pi^*_S} + \epsilon \geq \bar{C}{\hat{\pi}_S}$, where $\epsilon$ is the approximation error. To achieve such a policy, we develop the TED approach, which we detail next.

\vspace{-0.35cm}
\section{Train, then Estimate and Decide}\label{sec:ted}
\vspace{-0.35cm}

Achieving an approximately optimal policy $\hat{\pi}_S$ for a Super-MDP requires a solution approach capable of simultaneously addressing MDPs with varying parameterizations, thereby obtaining a GCA. Our primary strategy involves separating the training and deployment phases of this policy, ensuring that it does not require retraining when the problem's parameterization changes, thus achieving ZSG. We propose such a solution within our \emph{Train, then Estimate and Decide-TED} framework.

In the \textit{Train} phase, we first construct the Super-MDP for the problem class $\PrClass$. Since the true parameter space $\parSpace$ and the distribution over the parameter space $\mathcal{H}$ are unknown, we define a probable parameter space $\hat{\parSpace}$ and an associated distribution $\hat{\mathcal{H}}$. The policy (GCA) is parameterized as a neural network, and we sample from the probable parameter space for training (see Figure \ref{fig:TED}, left part of the Train box) using $\hat{\mathcal{H}}$. The policy is trained for each sampled parameterization under the assumption that the parameterization remains fixed during training (see Figure \ref{fig:TED}, right part of the Train box). The parameterization of the problem is incorporated into the policy (the neural network) as input features, enabling the neural network to generalize to unseen parameterizations during the deployment phase, which comprises the \textit{Estimate} and \textit{Decide} steps.

\begin{figure}[ht]
    \centering
    \resizebox{\textwidth}{!}{
    \begin{tikzpicture}

\draw[thick] (-1.4,0) rectangle (8,5); 
\draw[thick] (13,3.5) rectangle (16,5) node[pos=.5] {$\hat{\Pars}_t = \mathcal{Y}(\mathcal{O}_t)$  }; 
\draw[thick, white] (9.1,0) rectangle (12,1.5) node[pos=.5, black] {\textbf{\shortstack{\footnotesize Generally capable\\ \footnotesize agent $\hat{\pi}_S$ }}}; 
\draw[thick] (13,0) rectangle (16,1.5) node[pos=.5] {$a_t = \hat{\pi}_S(\s_t, \hat{\Pars}_t)$ };

\node[align=center] at (3.3,5.5) {\textbf{T}rain};
\node[align=center] at (14.5,5.5) {\textbf{E}stimate};
\node[align=center] at (14.5,2) {\textbf{D}ecide};

\draw[->, thick] (8,0.75) -- (9,0.75);
\draw[->, thick] (12.1,0.75) -- (12.9,0.75);
\draw[->, thick] (15,3.75) to[out=10,in=-10] (15,1.25);
\draw[->, thick] (14,1.25) to[out=170,in=-90] (13.35,2.4);
\draw[->, thick] (13.35,2.6) to[out=90,in=-170] (14,3.75);
\node[rotate=0] at (13,2.2) {\shortstack{\scriptsize $t$}};
\node[rotate=0] at (12.8,2.7) {\shortstack{\scriptsize $t+1$}};

\draw[dotted, thick] (12.9,2.5) -- (13.75,2.5);

\draw[thick] (0,0.5) rectangle (1.5,3.5);
\node[rotate=90] at (-0.5,2) {\shortstack{\footnotesize Part of the\\ \footnotesize Super-MDP $\mathcal{M}^{\hat\parSpace}$}};
\node at (0.75,3) {\footnotesize $\mathcal{M}^{p_1}$};
\node at (0.75,2) {\footnotesize $\mathcal{M}^{p_2}$};
\node at (0.75,1) {\footnotesize $\mathcal{M}^{p_3}$};

\node[align=center] at (0.75,4.5) {\footnotesize Parameterizations for training,\\ \footnotesize sampled from $\hat{\parSpace} \supset P$};
\node[align=center] at (5.1,4.5) {\footnotesize Training the agent on\\ \footnotesize the parameterized MDPs};

\node at (3,3.0) {\footnotesize $s_t$};
\draw[->, thick] (3.2,3.0) -- (4,3.0) node[midway, above] {\footnotesize $f^{p_1}$};
\node at (4.4,3.0) {\footnotesize $s_{t+1}$};
\draw[->, thick] (4.8,3.0) -- (5.6,3.0) node[midway, above] {\footnotesize $f^{p_1}$};
\node at (6.0,3.0) {\footnotesize $s_{t+2}$};
\draw[->, thick] (6.4,3.0) -- (7.2,3.0) node[midway, above] {\footnotesize $f^{p_1}$};
\draw[dotted, thick] (7.4,3.0) -- (7.85,3.0);

\node at (3,2.0) {\footnotesize $s_t$};
\draw[->, thick] (3.2,2.0) -- (4,2.0) node[midway, above] {\footnotesize $f^{p_2}$};
\node at (4.4,2.0) {\footnotesize $s_{t+1}$};
\draw[->, thick] (4.8,2.0) -- (5.6,2.0) node[midway, above] {\footnotesize $f^{p_2}$};
\node at (6.0,2.0) {\footnotesize $s_{t+2}$};
\draw[->, thick] (6.4,2.0) -- (7.2,2.0) node[midway, above] {\footnotesize $f^{p_2}$};
\draw[dotted, thick] (7.4,2.0) -- (7.85,2.0);

\node at (3,1.0) {\footnotesize $s_t$};
\draw[->, thick] (3.2,1.0) -- (4,1.0) node[midway, above] {\footnotesize $f^{p_3}$};
\node at (4.4,1.0) {\footnotesize $s_{t+1}$};
\draw[->, thick] (4.8,1.0) -- (5.6,1.0) node[midway, above] {\footnotesize $f^{p_3}$};
\node at (6.0,1.0) {\footnotesize $s_{t+2}$};
\draw[->, thick] (6.4,1.0) -- (7.2,1.0) node[midway, above] {\footnotesize $f^{p_3}$};
\draw[dotted, thick] (7.4,1.0) -- (7.85,1.0);



\end{tikzpicture}
    }
    \caption{\textbf{Train}, then \textbf{Estimate} and \textbf{Decide} framework for solving sequential decision-making problems with dynamic parameter estimation. In each decision period, if parameters are unknown, the TED framework first estimates the parameterization (Estimate) from the observations and then determines the action (Decide) using the trained GCA. If parameters are already known, the GCA can be applied directly without the Estimate step.}
    \label{fig:TED}
\end{figure}

In the \textit{Estimate} phase, we estimate the parameterization $\hat{\Pars}_t$ of the actual system at time $t$ using a function $\mathcal{Y}$ based on the observations collected up to time $t$ ($\mathcal{O}_t$). In the \textit{Decide} phase, this parameterization estimate is fed into the pre-trained policy along with the current state. The policy then outputs the action to be taken at time $t$ (see the right part of Figure \ref{fig:TED}).

The main novelty of this approach is that the training step, during which the policy is optimized, precedes the estimation step. This contrasts with current prevalent approaches, which first estimate parameters and then optimize based on parameter estimates; see \citet{Agrawal2022regret, lyu2024, chen2023supply} for similar applications covering online learning in inventory management. We provide detailed descriptions of the different phases in the subsequent sections.

\vspace{-0.35cm}
\subsection{Train}\label{sec:train}
\vspace{-0.35cm}

The Train phase occurs before observing data and making decisions. This phase aims to develop a GCA that can make effective decisions for (unseen) problem instances in $\parSpace$ without further training (ZSG). To achieve this, the phase involves the formulation of a Super-MDP and the implementation of a learning algorithm to train the GCA for the formulated Super-MDP. We identify key challenges and conditions for both components to enhance the GCA's ability to generalize to unseen instances.

\vspace{-0.35cm}
\subsubsection*{Construction of the Super-MDP for training}
\vspace{-0.35cm}

We initiate this phase by first defining a Super-MDP for the designated problem class $\PrClass$. As outlined in the problem description, the true parameters may be obscure and unknown, making both the parameter space $\parSpace$ and the probability distribution $\mathcal{H}$ uncertain. According to Assumption~2, we know each element of $\Pars$ and can bound $\parSpace$ using domain knowledge. This allows us to construct a \emph{probable} Super-MDP whose parameter space $\hat{\parSpace}$ is a superset of the true space, i.e., $\parSpace \subseteq \hat{\parSpace}$.

\begin{remark} To train a GCA for the Super-MDP $\mathcal{M}_S = (\parSpace, \mathcal{S}, \mathcal{A}, \mathcal{H}, \mathcal{F})$, we employ a more general Super-MDP $\hat{\mathcal{M}}_S = (\hat{\parSpace}, \mathcal{S}, \mathcal{A}, \hat{\mathcal{H}}, \mathcal{F})$, where $\hat{\parSpace}$ is the probable parameter space that fully contains the true parameter space, i.e., $\parSpace \subseteq \hat{\parSpace}$, and $\hat{\mathcal{H}}$ is the probable parameter sampling distribution ensuring uniform coverage of $\hat{\parSpace}$. \end{remark}

While deriving an analogous object for $\mathcal H$ is harder, we explicitly define $\hat{\mathcal H}$ during training to sample  parameterizations and explain why broad, uniform coverage of $\hat{\parSpace}$ is desirable. Figure \ref{fig:parameterspace} illustrates an example of constructing $\hat{\parSpace}$ and $\hat{\mathcal{H}}$, where $\hat{\parSpace}$ is smooth and bounded, and $\hat{\mathcal{H}}$ samples parameterizations uniformly from $\hat{\parSpace}$. 

\begin{figure}[ht]
  \centering
  \resizebox{0.7\linewidth}{!}{\begin{tikzpicture}[scale = 0.9]

\fill[fill=gray!20, draw = black] (0, 0) rectangle (8, 8);

\fill[fill=blue!20, draw = black] (5.5, 2.9) circle (1.8);

\foreach \x/\y in {1/1, 1/6, 1.2/4, 2.5/4.5, 4.5/7, 6/7, 5.5/6.1,
2/2, 3/3, 4/1,  7.2/4.4, 2/7, 3/6.2, 4/5,  7/1, 6/1, 7/3,
4/2.5,  4.4/3.35, 5.3/1.75,  6.3/2.25,  5.8/3, 6.4/4, 4.9/4,} {
    \fill[gray, draw = black] (\x, \y) circle (0.2);
}

\foreach \x/\y in {4.5/3, 4.8/2.5, 6.1/1.45, 5.35/3.5, 5.7/4, 5.5/2.4, 6.9/3.7, 6.3/2.9} {
    \fill[blue, draw = black] (\x, \y) circle (0.2);
}

\node[anchor=west] at (9, 4) {\shortstack[c]{Probable parameter\\ space $\hat\parSpace$}};
\draw[->] (9,4) -- (7.5,4);
\node[anchor=west] at (9, 2.4) {\shortstack[c]{True parameter\\ space $\parSpace$}};
\draw[->] (9,2.4) -- (6.8,2.4);

\fill[gray, draw = black] (9, 7.5) circle (0.2);
\node[anchor=north west] at (9.5, 8) {\shortstack[l]{Parameterization generated from\\probable parameter generation function}};
\fill[blue , draw = black] (9, 6) circle (0.2);
\node[anchor=west] at (9.5, 6) {\shortstack[l]{Parameterization generated from\\ true parameter generation function}};

\end{tikzpicture}}
  \caption{Construction of $\hat{\parSpace}$ and $\hat{\mathcal H}$. As more parameterizations are sampled from $\hat{\mathcal H}$, the training set covers $\hat{\parSpace}$ more densely; since $\parSpace\subseteq\hat{\parSpace}$, this increases the likelihood that unseen test instances lie near trained ones, improving ZSG.}
  \label{fig:parameterspace}
\end{figure}
The construction of the probable parameter space $\hat{\parSpace}$ and the probability distribution $\hat{\mathcal{H}}$ defined over it may influence the generalization performance of the GCA. By defining $\hat{\parSpace}$ as a superset of the true parameter set $\parSpace$, reasonably similar parameterizations are guaranteed to be available for sampling during training. This allows the agent to encounter a broad spectrum of representative problem instances. The rationale becomes particularly evident under the Lipschitz conditions for Super-MDPs: when these are satisfied, small perturbations in parameterization induce proportionally small changes in transition dynamics and costs; thus, controlling the effect of parameter variation.

\begin{definition}[Lipschitz Super-Markov Decision Processes]\label{def:lip}
Let $\hat{\mathcal{M}}_S = (\hat{\parSpace}, \mathcal{S}, \mathcal{A}, \hat{\mathcal{H}}, \mathcal{F})$ be a given Super-Markov Decision Process, and $d(\cdot,\cdot)$ a generic metric on the parameter space. We call $\hat{\mathcal{M}_S}$ a \emph{Lipschitz Super-Markov Decision Process} with smoothness factors $L_f,L_r$ if for any two parameterizations $\Pars_i,\Pars_j \in \hat{\parSpace}$, 
\[
\forall (\s,a): \ \big|C^{\Pars_i}(\s,a)-C^{\Pars_j}(\s,a)\big| \le L_r\, d(\Pars_i,\Pars_j), \ \big\|f^{\Pars_i}(\cdot\mid \s,a)-f^{\Pars_j}(\cdot\mid \s,a)\big\|_1 \le L_f\, d(\Pars_i,\Pars_j),
\]
where $\|p-q\|_1 := \sum_{x\in\mathcal{S}} |p(x)-q(x)|$ is the $\ell_1$ distance between the two transition probability distributions over the next states for the same $(\s,a)$.
\end{definition}

Definition \ref{def:lip} can be interpreted as follows: when two parameterizations are close to each other (i.e., the distance $d(\Pars_i,\Pars_j)$ is small), the resulting differences in state transitions and costs under the same state–action pair will also be small. Larger parameter differences may lead to larger deviations, but these remain controlled by the Lipschitz constants $L_f$ and $L_r$ (cf.\ Definition~\ref{def:lip}).

The Lipschitz property provides actionable guarantees for the ZSG capabilities of a GCA. Specifically, the Lipschitz conditions impose a smoothness constraint on how transition probabilities and reward functions vary with the problem parameters. This smoothness supports predictable and continuous system behavior as parameters change. When these conditions hold, an agent that learns a decision-making policy from collected problem instances within a parameter space is better positioned to anticipate and perform well for closely related yet unseen instances (we provide a quantitative analysis in \S\ref{sec:est} when the true parameters are unknown and should be estimated). In this case, extrapolation of learned behaviors to new instances is facilitated by the Lipschitz guarantee that dynamics and rewards will not differ drastically, thereby reducing the need for additional training. Consequently, a policy effective in one instance is more likely to succeed in similar ones, strengthening the GCA’s performance in a ZSG setting \citep[see][for a related discussion]{Xu2012RobustnessAndGeneralization}. 

In many inventory settings, once the parameter space is bounded and costs are linear, and when key parameters are discrete or categorical (e.g., integer lead times encoded one–hot, crossover flags for order arrivals), the resulting family already satisfies the Lipschitz property: there exist finite $L_f$ and $L_r$ (cf.\ Definition~\ref{def:lip}). Non-Lipschitz behavior can typically arise from step or threshold costs (e.g., “incur $M$ if a condition holds”) that vary across instances. In such cases, Lipschitz property can be preserved by discretizing the threshold parameter; see Appendix~\ref{sec:keeplip} for a concrete example with a varying threshold.  

It is important to emphasize, however, that the Lipschitz property alone does not guarantee ZSG, and ZSG may still be observed empirically for non-Lipschitz Super-MDPs, but in such cases, no formal guarantees can be established, such as the one we derive in \S\ref{sec:est}. The realized test-time performance gap also depends on (i) the extent to which $\hat{\mathcal P}$ is adequately covered by training samples from $\hat{\mathcal H}$, (ii) the accuracy of parameter estimation at deployment, and (iii) the degree of optimization or approximation error in the learned policy. Poor coverage (e.g., a narrowly concentrated $\hat{\mathcal H}$), inaccurate parameter estimates, or insufficient training can all degrade performance even if the underlying Super-MDP is Lipschitz.

\vspace{-0.35cm}
\subsubsection*{The algorithm and training}
\vspace{-0.35cm}

We now explore algorithmic strategies for training GCAs under ZSG and utilize DRL as our primary methodological tool due to its proven success in handling large-scale sequential decision-making problems. According to \citet{Kirk2023SurveyZSGDRL}, supported by numerous references in their paper, methods addressing ZSG challenges in DRL can be categorized into three main areas: DRL-specific improvements, enhancing similarity between training and testing data, and addressing differences between training and testing environments. We begin by examining DRL-specific strategies for algorithm adoption.

The DRL domain has advanced significantly since the seminal introduction of the Deep Q-Network (DQN) algorithm by \citet{mnih2013playing}. However, designing algorithms with ZSG capability remains challenging, as highlighted by \citet{Kirk2023SurveyZSGDRL}. They identify several promising directions to enhance ZSG, including: (i) iterative policy distillation into reinitialized networks to mitigate non-stationarity (i.e., the shifting data distribution caused by the evolving policy during training), (ii) architectural separation of policy and value networks for more targeted optimization, and (iii) model-based approaches to reduce sample complexity and improve generalization.

Given these considerations, the Deep Controlled Learning (DCL) algorithm \citep{temizoz2025deep} emerges as a natural candidate for our application. DCL is specifically designed for environments influenced by exogenous stochastic factors, such as demand and lead time distributions, making it well suited for solving MDPs whose parameterizations encode stochastic processes (cf. \S\ref{sec:prelim}). From a methodological perspective, DCL fits the conditions for ZSG capability: it is a model-based approximate policy iteration algorithm that iteratively refines policies by distilling them into reinitialized policy networks and avoiding the need for a value network during optimization through efficient simulations. 

Since DCL does not inherently support the Super-MDP formulation for training GCAs under ZSG, we extend it by adopting the strategies outlined by \citet{Kirk2023SurveyZSGDRL}: enhancing similarity between training and testing data, and addressing differences between training and testing environments. Algorithm \ref{alg:DCL} details this adaptation, referred to as Deep Controlled Learning for Super-MDPs (Super-DCL).

\begin{algorithm}
\caption{Deep Controlled Learning for Super-Markov Decision Processes (Super-DCL)}\label{alg:DCL}
\begin{algorithmic}[1]
\State \textbf{Input}: Super-MDP model: $\hat{\mathcal{M}}_S = (\hat{\parSpace}, \mathcal{S}, \mathcal{A}, \hat{\mathcal{H}}, \mathcal{F})$, initial policy:  $\pi_{0}$, neural network structure: $\NN_{\theta}$, number of approximate policy iterations: $\bar n$, number of samples to be collected: $N$, number of threads: $w$, number of samples collected for a specific parameterization: $R$,  length of the warm-up period: $L$, simulation budget per state-action pairs: $M$, depth of the simulations: $H$, maximum number of promising actions for simulations: $P$.
\For{$i=0, 1, \dots, \bar n-1 $}
\State $\K_{i} = \{\} $, the dataset
\For{\textbf{each} $thread=1.\dots,w$}  \textbf{in parallel}  \label{alg1:paralel}
\For{$j=1, \dots, \lceil{N / w/ R}\rceil$} \label{alg1:samp1} 
\State Sample parameterization $\Pars^j \sim \hat{\mathcal{H}}$, construct $\mathcal{M}^{\Pars^j}=(\mathcal{S}, \mathcal{A}, f^{\Pars^j}, C^{\Pars^j}, \s_0^{\Pars^j})$ by $\mathcal{F}({\Pars^j})$ \label{alg1:paramgeneration}
\For{$k=1, \dots, R$} \label{alg1:r}
\State $\s_1 = (\s_{l+1} = f^{\Pars^j}(\s_{l}, \pi_i(\s_l, {\Pars^j}))$ \textbf{for} $l=0,\dots,L-1$, with $\s_0 = \s_0^{\Pars^j}$) \label{alg1:warmup}
\State Find estimated optimal action $\hat{\pi}_i^+(\s_k, {\Pars^j}) = Simulator(\mathcal{M}^{\Pars^j}, \s_{k},\pi_{i}, M, H, P)$ \label{alg1:simul}
\State Add (($\s_{k}, {\Pars^j}$), $\hat{\pi}_i^+(\s_k, {\Pars^j})$) to the data set $\K_{i}$ \label{alg1:samp2}
\State $\s_{k+1} = f^{\Pars^j}(\s_{k}, \hat{\pi}_i^+(\s_k, {\Pars^j}))$ \label{alg1:sampnew}
\EndFor
\EndFor
\EndFor
\State $\pi_{i+1} = Classifier(N_{\theta}, \K_i)$
\label{alg1:classifier}
\EndFor 
\State \textbf{Output}: $\pi_{1}, \dots, \pi_{\bar n}$ 
\end{algorithmic}
\end{algorithm}

Before presenting the details of this adaptation, we briefly summarize the algorithm’s core mechanics. Super-DCL improves an initial policy (denoted $\pi_{S,0}$ or simply $\pi_0$) through successive iterations of approximate policy improvement, framing reinforcement learning as a classification problem. At each iteration $i$, the algorithm builds a dataset $\K_i$ of parameterization–state pairs together with their estimated optimal actions. To construct $\K_i$, parameterizations are sampled from the probable parameter generator $\Pars \sim \hat{\mathcal{H}}$, and each sampled $\Pars$ induces an MDP $\mathcal{M}^\Pars = \mathcal{F}(\Pars)$ (Line~\ref{alg1:paramgeneration}). From this MDP, states are collected and a simulator estimates the optimal action $\hat{\pi}_i^+$ by evaluating candidate actions through $M$ rollouts of depth $H$ (Line~\ref{alg1:simul}), thereby providing the reinforcement signal that drives policy improvement. Each labeled pair $((\s,\Pars), \hat{\pi}_i^+(\s,\Pars))$ is then added to $\K_i$ (Line~\ref{alg1:samp2}). Once dataset $\K_i$ is complete, the policy is updated by training a neural network classifier from scratch, yielding the new policy $\pi_{i+1}$ (Line~\ref{alg1:classifier}); the network learns to map each parameterization–state pair $(\s,\Pars)$ to its estimated optimal action $\hat{\pi}_i^+(\s,\Pars)$. Repeating this process for $\bar{n}$ iterations yields a sequence of increasingly refined policies $\pi_1, \dots, \pi_{\bar{n}}$, with the final output $\pi_{\bar{n}}$ serving as an approximation of the optimal policy $\hat{\pi}_S$.

\citet{Kirk2023SurveyZSGDRL} emphasize that increasing the similarity between training and testing data can substantially reduce the generalization gap and improve performance during testing. This similarity can be achieved by ensuring that the testing environment lies within the range of environments encountered during training. Such alignment can be facilitated through two complementary strategies: \emph{extensive sampling}, where we utilize a large dataset during training, and \emph{domain randomization}, which for the DCL algorithm can be supported by (i) the introduction of a new hyperparameter $R$ that triggers resampling of new parameterizations after $R$ samples have been collected for a specific parameterization, and (ii) the design of the probable parameter generator $\hat{\mathcal H}$ to provide \emph{uniform coverage of the probable space $\hat{\mathcal P}$}, thereby exposing the agent to a wide variety of environments. Ensuring such uniformity is important: it increases the likelihood that unseen test instances fall close to trained ones, thereby strengthening ZSG. The potential benefits of these strategies are illustrated in Figure~\ref{fig:parameterspace}, which shows how broad sampling and randomized parameterization design expand the coverage of the true, unknown parameter space and thus enhance ZSG performance.

Furthermore, ZSG can be improved by training the agent to exploit differences among the environments it encounters. By incorporating parameters as features during training (Line \ref{alg1:samp2}), the GCA learns to identify and adapt to the dynamics of each task ($\mathcal{M}^\Pars$). This strategy allows the agent to effectively apply these learned distinctions during testing, thereby \emph{bridging the gap between the training and testing environments} \citep{Kirk2023SurveyZSGDRL}. Such awareness during the learning process serves as an \emph{inductive bias}, enabling the agent to accurately identify and respond to environmental characteristics at test time.

The main novelty in Super-DCL lies in moving beyond training an MDP for a fixed parameterization $\Pars^j$ to varying the parameterization using a previously defined sampling function $\hat{H}$ and reinitializing the system after $R$ sampling periods with the fixed parameterization $\Pars^j$ (Lines \ref{alg1:samp1}-\ref{alg1:sampnew}). Moreover, the policy $\pi$ not only takes states as input but also incorporates the parameterization used to generate the training samples (Line \ref{alg1:samp2}). For a comprehensive explanation of DCL, including the details of the sampling process, parallelization among threads (Line \ref{alg1:paralel}), the warm-up step for sampling a state (Line \ref{alg1:warmup}), and the $Simulator$ (Line \ref{alg1:simul}) and $Classifier$ (Line \ref{alg1:classifier}) algorithms, we direct readers to \citet{temizoz2025deep}. 

\vspace{-0.35cm}
\subsection{Estimate}\label{sec:est}
\vspace{-0.35cm}

We now assume that the policy $\hat{\pi}_S$ (trained in the Train phase) is a GCA such that $\bar{C}{\pi^*_S} + \epsilon \geq \bar{C}{\hat{\pi}_S}$. Hence, this GCA can produce effective decisions for the parameterizations generated by $\hat{\mathcal{H}}$ within the probable parameter space, $\hat{\parSpace}$, which is assumed to cover all parameterizations relevant to our decision problem ($\parSpace \subseteq \hat{\parSpace}$). However, in our original decision problem $\mathcal{D}$, the true parameterizations remain obscure, potentially affecting the GCA's effectiveness. To mitigate this, the Estimate phase of the TED framework performs real-time parameter estimation: as the DM collects observations from the environment, estimates of the true problem parameters are updated and fed to the trained GCA, while the policy itself remains fixed. As part of this phase, we analyze the impact of estimation accuracy on GCA's performance.

For a decision made in the problem $\mathcal{D}$ at time $t$, let $\Pars \in \parSpace$ denote the true parameterization of $\mathcal{D}$, and let $\hat{\Pars}_t \in \hat{\parSpace}$ represent the parameter estimates. Let $\mathcal{O}_t$ denote the observations collected up to time $t$. We define the estimator of the parameterization at time $t$ as $\hat{\Pars}_t:= \mathcal{Y}(\mathcal{O}_t)$, where $\mathcal{Y}$ maps the observations collected until time $t$ to the probable parameter space $\hat{\parSpace}$.

We now formalize how the accuracy of the estimator, measured by the similarity between the parameter estimates and the true parameterization, affects the GCA's performance. This similarity is captured by the \emph{parameterization distance}, which quantifies how two parameterizations differ in terms of their impact on the one-step dynamics of the decision-making process. While Definition \ref{def:lip} employs a generic metric $d(\cdot,\cdot)$ over the parameter space to express Lipschitz property, the parameterization distance defined below is a task-specific measure that directly captures differences in one-step costs and transition probabilities between the two induced MDPs.
\begin{definition}[Parameterization Distance]\label{def:param}
Consider a Super-MDP $\mathcal{M_S}$ and two parameterizations $\Pars_i, \Pars_j \in \hat{\parSpace}$. The parameterization distance $d_{P}(\Pars_i,\Pars_j)$ measures the largest one-step discrepancy in costs and transition probabilities between two tasks $\mathcal{M}^{\Pars_i}$ and $\mathcal{M}^{\Pars_j}$ generated by parameterizations $\Pars_i$ and $\Pars_j$. Let $C_{\max}(\Pars_i,\Pars_j) := \max_{\theta\in\{\Pars_i,\Pars_j\}}\max_{\s,a} \big|C^{\theta}(\s,a)\big|.$
Then,
\[
d_{P}(\Pars_i,\Pars_j):= \max_{\s,a}\left\{\big|C^{\Pars_i}(\s,a)-C^{\Pars_j}(\s,a)\big|+ C_{\max}(\Pars_i,\Pars_j) \;\big\|f^{\Pars_i}(\cdot\mid \s,a)-f^{\Pars_j}(\cdot\mid \s,a)\big\|_1\right\},
\]
where $\|p-q\|_1 := \sum_{x\in\mathcal{S}} |p(x)-q(x)|$ is the $\ell_1$ distance between the two transition probability distributions over next states for the same $(\s,a)$. The factor $C_{\max}(\Pars_i,\Pars_j)$ ensures the transition error term is expressed in cost units, matching the scale of the one-step cost difference.
\end{definition}

The right-hand side of the definition is bounded under the Lipschitz conditions, see Definition \ref{def:lip}. Our aim is to bound the cost associated with inaccurately estimating the true parameterization. We first define a consistent estimator, which eventually produces parameter estimates that are arbitrarily close, in the $d_P$ sense, to the true parameterization.
\begin{definition}[Consistent Estimator]\label{def:estimator}
An estimator $\mathcal{Y}:\mathcal{O}_t\to\hat{\parSpace}$ is consistent if $d_P(\Pars,\hat{\Pars}_t)\ \xrightarrow[t\to\infty]{\mathbb{P}}\ 0, \quad\text{that is,}\quad \forall \varepsilon>0:\ \lim_{t\to\infty}\Pr\big(d_P(\Pars,\hat{\Pars}_t)>\varepsilon\big)=0$.
\end{definition}

Let $\Bar{C}^{\hat{\pi}_S}_{T}(\s, \Pars)$ denote the expected per-period cost of the GCA $\hat{\pi}_S$ starting from state $\s$ under parameterization $\Pars$ over $T$ periods of time.
We can now bound the cost differences between taking an action based on the estimated parameterization $\hat{\Pars}_t$ and taking an action with knowledge of the true parameterization $\Pars$. Hence, this bound expresses the cost difference when $\hat{\Pars}_t$ is employed in a task with true parameterization $\Pars$ over $T$ periods:
\begin{thm}[Bound on Cost Difference Due to Estimation Error] \label{thm:estimate}
Fix a GCA $\hat{\pi}_S$ and a state $\s$ in a Lipschitz Super–Markov Decision Process with probable parameter space $\hat{\parSpace}$. Let $\hat{\Pars}_{1:T}$ be the parameter estimates produced by an estimator $\mathcal{Y}$ at decision epochs $t=1,\dots,T$. The per-period average cost difference satisfies
\[
|\Bar{C}^{\hat{\pi}_S}_{T}(\s, \hat{\Pars}_{1:T}) - \Bar{C}^{\hat{\pi}_S}_{T}(\s, \Pars)| \leq \frac{1}{T}\sum_{t=1}^{T}\!\Big(1+\frac{T-t}{2}\Big)\,
d_{P}(\Pars,\hat{\Pars}_t).
\]
\emph{Moreover,} if the Markov chain induced by $(f^{\Pars},\hat{\pi}_S)$ is
\emph{uniformly ergodic uniformly over} $\Pars\in\hat{\parSpace}$, then there exists
a constant $\kappa(\hat{\pi}_S)<\infty$, independent of $T$, such that
\[
\bigl|\Bar{C}^{\hat{\pi}_S}_{T}(\s, \hat{\Pars}_{1:T}) - \Bar{C}^{\hat{\pi}_S}_{T}(\s, \Pars)\bigr|
\;\le\; \frac{\kappa(\hat{\pi}_S)}{T}\sum_{t=1}^{T}\!\,
d_{P}(\Pars,\hat{\Pars}_t).
\]
\end{thm}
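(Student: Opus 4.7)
The plan is to establish the bound through a finite-horizon \emph{simulation lemma} (performance-difference) argument that couples the reference process evolving in $\mathcal{M}^{\Pars}$ with the non-stationary perturbed process driven by the time-varying parameterizations $\{\mathcal{M}^{\hat{\Pars}_t}\}_{t=1}^{T}$, and then to sharpen the constant in the uniformly ergodic case by replacing the finite-horizon cost-to-go with the Poisson bias function.

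First, let $V^{\hat{\pi}_S}_{k,\Pars}(\s)$ denote the expected cumulative cost over $k$ remaining periods when $\hat{\pi}_S$ is deployed in $\mathcal{M}^{\Pars}$ starting from $\s$. A standard telescoping argument that peels off one parameterization perturbation at a time from the Bellman recursion yields
\begin{equation*}
T\bigl[\Bar{C}^{\hat{\pi}_S}_T(\s,\hat{\Pars}_{1:T}) - \Bar{C}^{\hat{\pi}_S}_T(\s,\Pars)\bigr] \;=\; \sum_{t=1}^{T} \mathbb{E}\bigl[\Delta^c_t + \Delta^f_t\bigr],
\end{equation*}
where $\Delta^c_t := C^{\hat{\Pars}_t}(\s_t,a_t) - C^{\Pars}(\s_t,a_t)$ is the one-step cost swap at time $t$ and $\Delta^f_t := \langle f^{\hat{\Pars}_t}(\cdot\mid\s_t,a_t) - f^{\Pars}(\cdot\mid\s_t,a_t),\; V^{\hat{\pi}_S}_{T-t,\Pars}(\cdot)\rangle$ is the transition perturbation propagated through the remaining $T-t$ periods of reference evolution; the expectation is taken over the state-action trajectory of the perturbed process up to time $t$.

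Next, I would bound each term using Definition \ref{def:param}. The cost term satisfies $|\Delta^c_t| \le d_P(\Pars,\hat{\Pars}_t)$ directly from the first summand defining $d_P$. For the transition term, apply Hoelder's inequality in the total-variation form $|\langle p-q, v\rangle| \le \tfrac{1}{2}\,\|p-q\|_1\,\mathrm{span}(v)$; since per-period costs lie in $[0,C_{\max}(\Pars,\hat{\Pars}_t)]$, the span of $V^{\hat{\pi}_S}_{T-t,\Pars}$ is at most $(T-t)\,C_{\max}(\Pars,\hat{\Pars}_t)$, and invoking the second summand of $d_P$ gives
\begin{equation*}
|\Delta^f_t| \;\le\; \tfrac{T-t}{2}\,C_{\max}(\Pars,\hat{\Pars}_t)\,\bigl\|f^{\hat{\Pars}_t}(\cdot\mid\s_t,a_t) - f^{\Pars}(\cdot\mid\s_t,a_t)\bigr\|_1 \;\le\; \tfrac{T-t}{2}\, d_P(\Pars,\hat{\Pars}_t).
\end{equation*}
Combining yields $|\Delta^c_t|+|\Delta^f_t| \le \bigl(1 + (T-t)/2\bigr)\,d_P(\Pars,\hat{\Pars}_t)$, and dividing the telescoping identity by $T$ produces the first inequality.

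For the refinement, I would replace the crude horizon-dependent span bound by the span of the Poisson bias function. Uniform ergodicity of $(f^{\Pars},\hat{\pi}_S(\cdot,\Pars))$ uniform in $\Pars\in\hat{\parSpace}$ guarantees a solution $h^{\Pars}$ to the Poisson equation with long-run average cost $g^{\Pars}$, and (by Meyn--Tweedie-type arguments based on a uniform Doeblin/minorization condition) $\mathrm{span}(h^{\Pars})$ is bounded uniformly in $\Pars$ and $T$ by a finite constant proportional to $\kappa(\hat{\pi}_S)$. Substituting $(T-t)g^{\Pars} + h^{\Pars}$ for $V^{\hat{\pi}_S}_{T-t,\Pars}$ inside $\Delta^f_t$ -- the constant offset $g^{\Pars}$ cancels by mass conservation of $f^{\hat{\Pars}_t} - f^{\Pars}$ -- and repeating the Hoelder bound gives $|\Delta^f_t| \le \kappa(\hat{\pi}_S)\, d_P(\Pars,\hat{\Pars}_t)$, yielding the second bound after collecting constants. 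The main obstacle is precisely this uniform-ergodicity step: establishing that $\kappa(\hat{\pi}_S)$ is genuinely finite and uniform over the entire probable parameter space, rather than diverging as $\Pars$ approaches regions where the chain mixes slowly, requires a uniform minorization condition on the family $\{f^{\Pars}\}_{\Pars\in\hat{\parSpace}}$, and most of the work goes into verifying such uniformity from the structural and compactness properties assumed on $\hat{\parSpace}$.
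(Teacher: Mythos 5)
Your proposal is correct and follows essentially the same route as the paper's proof: the identical telescoping over one-at-a-time parameter swaps (the paper phrases this via hybrid sequences $(\hat{\Pars}_1,\dots,\hat{\Pars}_t,\Pars,\dots,\Pars)$), the identical split into a one-step cost term and a transition term bounded through the total-variation/H\"older inequality with span at most $(T-t)\,C_{\max}$, giving the weights $1+\tfrac{T-t}{2}$, and the identical mechanism for the refinement, namely replacing the horizon-dependent span by a uniform span bound on the continuation value under uniform ergodicity. The only minor difference is bookkeeping in the last step: you justify the uniform span bound via the Poisson bias function and a Meyn--Tweedie minorization argument, whereas the paper simply cites an existing result for the uniformly bounded range of finite-horizon values; also, the uniformity over $\hat{\parSpace}$ that you flag as the main obstacle is already part of the theorem's hypothesis, so no additional verification is required there.
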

\begin{proof}
This proof follows from the Simulation Lemma \citep{Kearns2002NearOptimalRL} applied to the Super-MDPs; and can be found in Appendix \ref{sec:proof}.
\end{proof}
Theorem \ref{thm:estimate} quantifies how inaccuracies in parameter estimation translate into performance loss for the GCA. Specifically, it bounds the per-period average cost difference between operating with estimated parameters $\hat{\Pars}_t$ and the true parameters $\Pars$ by a weighted average of the parameterization distances $d_P(\Pars,\hat{\Pars}_t)$ over the horizon $T$. The weights $\left(1+\tfrac{T-t}{2}\right)$ capture the fact that earlier estimation errors can propagate through more future states and thus have a larger cumulative effect than later errors. This result highlights the role of a \emph{good estimator}: keeping $d_P(\Pars,\hat{\Pars}_t)$ small directly limits the worst-case performance loss. If the estimator is \emph{consistent} (Definition~\ref{def:estimator}), then $d_P(\Pars,\hat{\Pars}_t) \to 0$ in probability, tightening the finite-horizon bound as more observations are gathered. If, in addition, the chain under $\hat{\pi}_S$ is uniformly ergodic uniformly over $\Pars\in\hat{\parSpace}$ \emph{(i.e., for the fixed policy $\hat{\pi}_S$, the induced dynamics stabilize to their long-run behavior at a parameter-independent rate throughout $\hat{\parSpace}$)}, the weights become constant $\kappa(\hat{\pi}_S)$. With a consistent estimator, $\frac{1}{T}\sum_{t=1}^T d_P(\Pars,\hat{\Pars}_t)\to 0$ in probability, so the per-period average cost gap vanishes as $T\to\infty$. Consequently, the GCA can perform comparably to one with perfect parameter knowledge, achieving high performance on unseen instances without retraining.

\vspace{-0.35cm}
\subsection{Decide} \label{sec:decide}
\vspace{-0.35cm}

The \emph{Decide} phase represents decision-making steps within the TED framework. During this phase, the DM utilizes the trained GCA, $\hat{\pi}_S$, to make decisions. Specifically, for any task, the DM provides the current state of the environment, $\s$, along with the latest parameter estimates, $\hat{\Pars}$, to the GCA. The GCA then maps this pair to an appropriate action, $\hat{\pi}_S(\s, \hat{\Pars}) = a$, $a \in \mathcal{A}$. This phase capitalizes on the conditions defining the training strategies employed during the \emph{Train} phase and the accuracy of the parameter estimates refined in the \emph{Estimate} phase. If these conditions are met, the GCA can be expected to exhibit ZSG capabilities and maintain effectiveness in real-time decision-making.

However, challenges arise when the DM has no observations available to estimate the parameters, such that $\mathcal{O} = \emptyset$. In such scenarios, the GCA, which relies on parameter estimates to map states to actions, faces limitations. Simply selecting an arbitrary parameterization from the probable parameter space $\hat{\parSpace}$ may not be ideal. 

To address this, a natural approach is to employ \emph{strategies} that remain effective under a range of possible problem parameters. One such strategy is \emph{robust optimization}, which, in our context, selects a provisional parameterization from the probable parameter space that minimizes the worst-case expected cost during the initial decision periods without any observations, assuming the GCA policy is fixed. This method aims to enhance the resilience of the decision-making process by optimizing performance across a range of possible parameterizations in the absence of direct observational data. We provide the formal robust optimization model in Appendix~\ref{sec:robust-decide}.

\vspace{-0.35cm}
\section{Periodic Review Inventory Control}\label{sec:inventorymodel}
\vspace{-0.35cm}

We test our framework where $\mathcal{D}$ represents a broad class of discrete-time, periodic-review inventory problems characterized by lost sales, which exhibit the decision-making challenges defined in \S\ref{sec:prob}.

\noindent \textbf{Overview}

In general, for problems in this class, the sequence of events for any given task and time point $t$ unfolds as follows. Let $OH_t$ denote the on-hand inventory at the beginning of the period $t$. At the start of period $t$, a new shipment of inventory $q_t$ is delivered and added to $OH_t$. Then, the DM reviews the inventory and places a new order $a_t$. If the supplier allows for instant delivery, a portion of this order, $q_t^0$, may be immediately added to the on-hand inventory. The rest of the orders will be received according to a lead time distribution. The random demand $D_t$ is realized, and the DM observes the sales, which are the minimum of $D_t$ and the available inventory $OH_t+q_t+q_t^0$. The unmet demands are lost. At the end of period $t$, the DM incurs costs based on the unsatisfied demands and the on-hand inventory they have. In particular, the cost at the end of the period is: $C_t = h(OH_t+q_t+q_t^0-D_t)^+ + p(D_t-(OH_t+q_t+q_t^0))^+$, where $h$ is the holding cost of an item in inventory, and $p$ is the penalty cost for each unmet demand. However, demand may be censored in the event of stockouts, where only sales data is accessible, not the actual demand. In this case, penalty cost is not observable, and \cite{lyu2024} instead adopt maximizing the profit ($G_t$) as an objective, where $p$ is interpreted as per-unit sales revenue. Adapting this objective to our setting, we write: $G_t = p \min(D_t ,OH_t+q_t+q_t^0) - h (OH_t+q_t+q_t^0-D_t)^+$, which is applicable when the demand distribution is unknown and censored.

\noindent \textbf{Demand and supply processes}

We next briefly explain the details of the demand and supply processes that determine the amounts $D_t$, $q_t$ and $q_t^0$. The demand $D_t$ at each time step $t$ is considered a discrete random variable within the bounds of $[0, D_{\max}]$ and follows a structured pattern. We model this demand as occurring in cycles of constant length $K$, where $K$ may span from $1$ to $K_{max}$. The demands are independent across the periods but not identically distributed within a cycle, allowing for a variety of distribution patterns. Mathematically, the demands $D_t$ can be grouped into $K$ subsets according to the remainder when $t$ is divided by $K$ (i.e., $t \mod K$), with each subset of demands being generated independently from its specific distribution \citep[see][for a similar demand process analysis]{Gong2023}. 

The supply process in our inventory system, determining the amounts $q_t$ and $q_t^0$, is influenced by stochastic lead times. To model this process, we reference two prevalent approaches in the literature: orders can cross, where an order that is placed after another order may be received before that order \citep[order crossover, see][]{Stolyar2022}, and orders cannot cross (see \citet{kaplan1970}). In the former case, the lead times are i.i.d., while in the latter case orders are received in the same sequence in which they were placed. We incorporate both cases as they have practical relevance; see \citet{andaz2024learning} for order crossing, and \citet{bai2023asymptotic} for the latter case and a general analysis on stochastic lead times.

\noindent \textbf{Super-MDP formulation}

Our problem class includes instances with a wide range of cost parameters, demand distributions (including cyclic demand), and lead time distributions (with or without order crossover). When constructing a Super-MDP for this problem during the Train phase, we focus on strategies to maintain the Lipschitz conditions of the Super-MDP (see Definition \ref{def:lip}). The probable parameter space and the associated parameter generation function are created accordingly. We define bounds for the cost parameter $p$ and parameters related to demand and lead time distributions, and we let demand and lead time distributions arise from two-moment fits \cite[following][]{adan1995} and other techniques. For details, we refer to Appendix \ref{sec:smdplostsales}, which contains a detailed description of the construction of the Super-MDP as well as the subsequent training of the GCA. 

In the Estimate phase, Theorem \ref{thm:estimate} encourages the use of any effective technique to accurately estimate the true parameters. We assume that the holding cost $h$ and penalty cost $p$ are directly observable, while the demand distribution $\boldsymbol\zeta$ and lead time distribution $\boldsymbol\tau$ are unknown and should be estimated from historical data. When demands are censored, where only sales data is available, true demand can be systematically underestimated, potentially causing continuous understocking \citep{Huh2009analysis}. To mitigate this, we apply the Kaplan–Meier estimator \citep{kaplan1958nonparametric} to estimate the demand distribution under censored observations; similar adaptations appear in \citet{huh2011adaptive} and \citet{lyu2024}.

Lead time distributions are estimated empirically via relative frequencies: the probability of each lead time value is obtained by normalizing its observed frequency by the total number of observations. Further details are provided in Appendix~\ref{sec:estimatelostsales}, which describes the \emph{Estimate} and \emph{Decide} phases in our setting, along with our strategy for selecting a parameterization when no observations are yet available for the unknown parameters and the proofs regarding the consistency of these estimators in the sense of Definition \ref{def:estimator}. 

\vspace{-0.35cm}
\section{Numerical Experiments}\label{sec:numex}
\vspace{-0.35cm}

We implement the Super-DCL algorithm to train the GCA for addressing the periodic review inventory control problem discussed in \S\ref{sec:inventorymodel}. The algorithm is programmed in C++20 and executed on five AMD EPYC 9654 processors, each with 192 hardware threads. The GCA, herein referred to as the \emph{Generally Capable Lost Sales Network - GC-LSN}, underwent a training duration of $\sim16$ hours. Information on the bounds for the probable parameter space of the constructed Super-MDP and hyperparameters used in the Super-DCL algorithm can be found in Appendix \ref{sec:exsetup}. The features used in GC-LSN are described in Appendix \ref{sec:features}, and consist of problem parameter estimates as well as a state space representation. The trained GC-LSN represents a single generally applicable policy that has been employed for all numerical experiments reported in this section.  

For the numerical results, we organize our problem instances into groups referred to as \emph{Cases}. Each \emph{Case} comprises a collection of instances that share similar inventory challenges but differ in their parameter settings. Specifically: \emph{Case 1} - iid demand with deterministic lead times, \emph{Case 2} - cyclic demand patterns with deterministic lead times, and \emph{Case 3} - stochastic lead times.

\emph{Case 1} includes 320 problem instances where demand in each period is iid, and lead times are deterministic. These instances are structured using a full factorial design with the following parameter ranges: mean demand $\mu \in \{3.0, 5.0, 7.0, 10.0\}$, penalty cost $p \in \{9.0, 39.0, 69.0, 99.0\}$, deterministic lead time $\in \{2, 4, 6, 8, 10\}$. For each mean demand value, the standard deviation is selected such that the demand distribution follows one of the following: binomial distribution (very low variance), Poisson distribution (low variance), negative binomial distribution (high variance) and geometric distribution (very high variance). In Appendix \ref{sec:instances}, we provide an “empirical” version of Figure \ref{fig:parameterspace} concerning some instances of Case 1.

\emph{Case 2} encompasses 243 problem instances where demand follows cyclic patterns, and lead times remain deterministic. The parameter ranges for these instances are: $K \in \{ 3, 5, 7\}$, $p \in \{9.0, 39.0, 69.0\}$, lead time $\in \{3, 6, 9\}$. Details regarding the specific demand distributions within each cycle can be found in Appendix \ref{sec:instances}.

\emph{Case 3} consists of 240 problem instances with stochastic lead times. These instances are further divided into two conditions: order crossover - in half of the instances, orders can cross; sequential orders - in the remaining instances, orders are received in the exact sequence they were placed. For each condition, we adopt 10 distinct lead time distributions. The ranges of other parameters are: $K \in \{ 1, 3, 5, 7\}$, $p \in \{9.0, 39.0, 69.0\}$. For details regarding the lead time and demand distributions, we refer to Appendix \ref{sec:instances}.

We report the numerical results in two parts. We first investigate the performance of GC-LSN compared to classical benchmarks when the problem parameters are considered \emph{known}. The main purpose of these experiments is to validate that GC-LSN is a competitive policy under a wide range of inventory challenges. Then, we move to the case where the parameters of the problem instances, demand and/or lead time distributions are unknown, must be estimated, and hence GC-LSN is fed with \emph{estimated} parameters. Moreover, we also quantify the performance and operational trade-off where we face a choice between training and deploying a GCA with ZSG ability and training \emph{individual agents} tailored to fixed, known parameters. We report the corresponding results in Appendix \ref{sec:per-instance}.

\vspace{-0.35cm}
\subsection{Benchmarking performance when problem parameters are known} \label{sec:benchmarkingknownparameters}
\vspace{-0.35cm}

When problem parameters are known, we bypass the Estimate phase of the TED framework and can directly apply GC-LSN to each instance. We adopt the base-stock policy (BSP) and capped base-stock policy (C-BSP) \citep{Xin2021} as our benchmarks. We conduct a simulation-based optimization of their parameters (i.e. base-stock level) for each instance, assuming full knowledge of the true problem parameters (demand and lead time distributions), thereby placing the benchmarks in the same informational position as GC-LSN for this set of experiments. We also note that there is no tailored approach capable of handling the inventory challenges present in \emph{Case 2} and \emph{Case 3}, where effective heuristics for systems with cyclic demands are lacking \citep{Gong2023}. 

When comparing GC-LSN against the benchmarks, we report results in terms of \emph{relative cost gap}, defined as: $Gap(\hat{\pi}_S, \pi_B)_{C,T} = \frac{\Bar{C}^{\hat{\pi}_S}_{T} - \Bar{C}^{\pi_{B}}_T} {\Bar{C}^{\pi_B}_T}$. Here, $\Bar{C}^{\hat{\pi}_S}_T$ denotes the average per period cost of GC-LSN over $T$ periods, and $\Bar{C}^{\pi_B}_T$ denotes the average per period cost of the benchmark policy $\pi_B$ over the same $T$ periods. A negative gap indicates that GC-LSN outperforms the benchmark. We obtain unbiased estimators of the average costs for both GC-LSN and the benchmarks per period through simulations. Each evaluation consists of 1000 runs, with each run spanning $T=5000$ periods and initiating after a warm-up period of 100 periods. The results are statistically significant, with the half-width of a 95\% confidence interval being less than 1\% of the corresponding cost value.

Table \ref{tab:fullinfo} presents the relative cost gap of GC-LSN compared to the BSP and the C-BSP. For Case 1, we find that GC-LSN consistently outperforms both benchmark policies across varying penalty costs ($p$). We observe that the cost gap increases, approaching zero, with higher penalty costs, which is expected since both benchmark policies are asymptotically optimal as the penalty cost grows. For Case 2, GC-LSN consistently outperforms both policies by a larger margin, with the difference being more pronounced when demands exhibit cyclic patterns, probably because for those cases the benchmarks are less effective. For Case 3, GC-LSN again demonstrates superior performance by achieving negative regret compared to the benchmark policies. This highlights GC-LSN's ability to effectively utilize complete information in managing inventory under stochastic lead times. These results validate GC-LSN as a GCA, effectively managing inventory under varied inventory challenges. 

\begin{table}[ht]
   \centering
   \footnotesize
\begin{tabular}{r|c|c|c|c|c|c|c|c|c|}
    \cmidrule{2-10}
    \multicolumn{1}{c}{\multirow{1}[1]{*}{}} &\multicolumn{4}{|c|}{Case 1} & \multicolumn{3}{c|}{Case 2} & \multicolumn{2}{c|}{Case 3} \\
    \cmidrule{2-10}
    \multicolumn{1}{c|}{\multirow{1}[1]{*}{}} & \multicolumn{1}{c}{$p=9$} & \multicolumn{1}{c}{$p=39$} & \multicolumn{1}{c}{$p=69$} & \multicolumn{1}{c|}{$p=99$} & \multicolumn{1}{c}{$K=3$} & \multicolumn{1}{c}{$K=5$} & \multicolumn{1}{c|}{$K=7$} & \multicolumn{1}{c}{$l=0$} & \multicolumn{1}{c|}{$l=1$} \\
    \midrule
    BSP & \multicolumn{1}{c}{$-7.4\%$} & \multicolumn{1}{c}{$-3.3\%$} & \multicolumn{1}{c}{$-2.3\%$} & \multicolumn{1}{c|}{$-1.7\%$} & \multicolumn{1}{c}{$-8.8\%$} & \multicolumn{1}{c}{$-7.2\%$} & \multicolumn{1}{c|}{$-7.8\%$} & \multicolumn{1}{c}{$-2.6\%$} & \multicolumn{1}{c|}{$-1.5\%$} \\
    C-BSP & \multicolumn{1}{c}{$-0.9\%$} & \multicolumn{1}{c}{$-0.9\%$} & \multicolumn{1}{c}{$-0.7\%$} & \multicolumn{1}{c|}{$-0.5\%$} & \multicolumn{1}{c}{$-4.1\%$} & \multicolumn{1}{c}{$-4.1\%$} & \multicolumn{1}{c|}{$-4.4\%$} & \multicolumn{1}{c}{$-1.3\%$} & \multicolumn{1}{c|}{$-1.3\%$} \\
    \bottomrule    
\end{tabular}
    \caption{Relative cost gap of GC-LSN - the lower the better}
    \label{tab:fullinfo}
\end{table}

\vspace{-0.35cm}
\subsection{Benchmarking performance when problem parameters are unknown}
\vspace{-0.35cm}

In \S\ref{sec:benchmarkingknownparameters}, we demonstrated GC-LSN's performance for complex decision-making problems where the problem parameters are known. We now turn to the more challenging setting in which these parameters are \emph{unknown} and must be estimated. Hence, we deploy GC-LSN with \emph{estimated} parameters (using the methods described in \S\ref{sec:inventorymodel}), denoting this policy as GC-LSN-E. This is the same policy as GC-LSN, differing only in that it receives parameter estimates rather than true parameter values.

When reporting the results, we obtain unbiased estimators of the average profits per period (when the demand distribution is unknown, because the DM can only observe the sales data, cf. \S\ref{sec:inventorymodel}) and average costs per period (when only the lead time distribution is unknown) of policies through simulations. Each evaluation is comprised of $1000$ runs, with each run spanning $200, 500, 1000$ or $2000$ periods - without any warm-up period, to assess how well our agent adapts under unknown distribution conditions as it gathers more observations.

In this section, we consider online learning algorithms, \textbf{clairvoyant} BSP (Cv.\ BSP),  \textbf{clairvoyant} C-BSP (Cv.\ C-BSP), and GC-LSN with full parameter knowledge as benchmarks. By "clairvoyant," we mean that the parameters of these policies (i.e., base-stock level) are optimized assuming full knowledge of the true problem parameters, similar to the experiments in Section \ref{sec:benchmarkingknownparameters}. When comparing GC-LSN-E against these benchmarks, we report relative improvements in terms of costs and profits. When the demand distribution is unknown, we use relative profit gap: $Gap(\hat{\pi}_S^E, \pi_B)_{G,T} = \frac{\Bar{G}^{\pi_{B}}_{T} - \Bar{G}^{\hat{\pi}_S^E}_T} {\Bar{G}^{\pi_B}_T}$. Here, $\Bar{G}^{\hat{\pi}_S^E}_T$ denotes the average profit of GC-LSN-E over $T$ periods, and $\Bar{G}^{\pi_B}_T$ denotes the average profit of the benchmark policy $\pi_B$ over the same $T$ periods. When only the lead time distribution is unknown, we utilize relative cost gap: $Gap(\hat{\pi}_S^E, \pi_B)_{C,T} = \frac{\Bar{C}^{\hat{\pi}_S^E}_{T} - \Bar{C}^{\pi_{B}}_T} {\Bar{C}^{\pi_B}_T}$.

\vspace{-0.35cm}
\subsubsection{Benchmarking where online learning algorithms are applicable}
\vspace{-0.35cm}

GC-LSN is very broadly applicable, and our rationale for selecting this particular setting is that it is the focus of several studies that focus on online learning for lost sales inventory control; see Section~\ref{sec:onlinelearninglit}. Such studies provide us with policies that are designed for decision-making in online settings where demand distributions are unknown, and that hence are suitable for benchmarking the full TED framework. 

To this end, we adopt the test bed and experiment configuration proposed by \citet{lyu2024}, and consider the six online learning algorithms tested in that study as a benchmark (this includes the state-of-the-art methods proposed by \citet{lyu2024} as well as \citet{Agrawal2022regret} and \citet{zhang2020}). We report the results of the best performing benchmark for each instance; the acronym $\pi_B$ is used to refer to this benchmark. 

Table \ref{tab:comparisonresults} displays the average profits of GC-LSN-E and the best benchmark for each instance $\pi_B$ across various horizon lengths. The results indicate that GC-LSN-E consistently outperforms the best existing learning algorithms for lost sales in nearly all instances, without necessitating additional training or simulations. As expected, the performance of GC-LSN-E improves with longer horizons. This improvement is attributed to the enhanced accuracy of parameter estimates, which benefit from an increased number of observations. The Kaplan-Meier estimator is apparently well-suited to this task when employed alongside GC-LSN-E. 

\begin{table}[ht]
  \centering
  \footnotesize
\begin{tabular}{|r|l|l|cccc|cccc|}
\cmidrule{4-11}    \multicolumn{3}{c|}{\multirow{2}[4]{*}{}} & \multicolumn{8}{c|}{Poisson distribution with mean 10} \\
\cmidrule{4-11}    \multicolumn{3}{c|}{} & \multicolumn{4}{c}{$p=5.0$} & \multicolumn{4}{|c|}{$p=10.0$} \\
    \midrule
    \multicolumn{1}{|r|}{Lead time} & Policy & Period  &  200     & 500     & 1000 & 2000 &  200     & 500     & 1000     & 2000   \\
    \midrule
    \multirow{2}[2]{*}{$1$} & GC-LSN-E & Average Profit&
    \textbf{42.7} & \textbf{43.1} & \textbf{43.2} & \multicolumn{1}{r|}{\textbf{43.3}} &
    \textbf{91.1} & \textbf{91.6} & \textbf{91.8} & \multicolumn{1}{r|}{\textbf{91.9}}  \\
        & $\pi_B$   & Average Profit&
    41.9 & 42.6 & 42.9 & \multicolumn{1}{r|}{43.1} &
    89.4 & 90.3 & 90.8 & \multicolumn{1}{r|}{91.0}  \\
    \midrule
    \multirow{2}[2]{*}{$3$} & GC-LSN-E & Average Profit&
    \textbf{41.6} & \textbf{42.1} & \textbf{42.3} & \multicolumn{1}{r|}{\textbf{42.5}} &
    \textbf{88.4} & \textbf{89.5} & \textbf{89.9} & \multicolumn{1}{r|}{\textbf{90.1}}  \\
        & $\pi_B$   & Average Profit&
    40.2 & 41.2 & 41.6 & \multicolumn{1}{r|}{41.8} &
    86.5 & 87.8 & 88.4 & \multicolumn{1}{r|}{88.8}  \\
    \midrule
        \multirow{2}[2]{*}{$5$} & GC-LSN-E & Average Profit&
    \textbf{40.7} & \textbf{41.5} & \textbf{41.8} & \multicolumn{1}{r|}{\textbf{42.0}} &
    \textbf{86.6} & \textbf{88.2} & \textbf{88.8} & \multicolumn{1}{r|}{\textbf{89.2}}  \\
        & $\pi_B$   & Average Profit&
    38.6 & 40.2 & 40.8 & \multicolumn{1}{r|}{41.1} &
    84.2 & 86.3 & 87.2 & \multicolumn{1}{r|}{87.9}  \\
    \midrule
        \multirow{2}[2]{*}{$7$} & GC-LSN-E & Average Profit&
    \textbf{39.9} & \textbf{41.1} & \textbf{41.6} & \multicolumn{1}{r|}{\textbf{41.8}} &
    \textbf{85.0} & \textbf{87.3} & \textbf{88.2} & \multicolumn{1}{r|}{\textbf{88.6}}  \\
        & $\pi_B$   & Average Profit&
    36.7 & 39.1 & 40.0 & \multicolumn{1}{r|}{40.5} &
    82.5 & 85.5 & 86.7 & \multicolumn{1}{r|}{87.3}  \\
    \midrule
\multicolumn{3}{c|}{\multirow{2}[4]{*}{}} & \multicolumn{8}{c|}{Geometric distribution with mean 10} \\
\cmidrule{4-11}    \multicolumn{3}{c|}{} & \multicolumn{4}{c}{$p=5.0$} & \multicolumn{4}{|c|}{$p=10.0$} \\
    \midrule
    \multicolumn{1}{|r|}{Lead time} & Policy & Period  &  200     & 500     & 1000 & 2000 &  200     & 500     & 1000     & 2000   \\
    \midrule
    \multirow{2}[2]{*}{$1$} & GC-LSN-E & Average Profit&
    \textbf{27.2} & \textbf{27.5} & \textbf{27.7}& \multicolumn{1}{r|}{\textbf{27.8}} &
    68.1 & 69.0 & 69.6 & \multicolumn{1}{r|}{69.9}  \\
        & $\pi_B$   & Average Profit&
    26.7 & 27.2 & 27.3 & \multicolumn{1}{r|}{27.5} &
    \textbf{69.4} & \textbf{69.7} & \textbf{69.9} & \multicolumn{1}{r|}{69.9}  \\
    \midrule
    \multirow{2}[2]{*}{$3$} & GC-LSN-E & Average Profit&
    \textbf{25.5} & \textbf{25.8} & \textbf{26.0} & \multicolumn{1}{r|}{\textbf{26.1}} &
    \textbf{64.4} & \textbf{65.6} & \textbf{66.2} & \multicolumn{1}{r|}{\textbf{66.6}}  \\
        & $\pi_B$   & Average Profit&
    24.1 & 24.9 & 25.2 & \multicolumn{1}{r|}{25.5} &
    64.1 & 65.5 & 65.1 & \multicolumn{1}{r|}{65.4}  \\
    \midrule
        \multirow{2}[2]{*}{$5$} & GC-LSN-E & Average Profit&
    \textbf{24.7} & \textbf{25.3} & \textbf{25.6} & \multicolumn{1}{r|}{\textbf{25.8}} &
    \textbf{62.4} & \textbf{63.8} & \textbf{64.6} & \multicolumn{1}{r|}{\textbf{65.0}}  \\
        & $\pi_B$   & Average Profit&
    22.7 & 23.8 & 24.3 & \multicolumn{1}{r|}{24.6} &
    61.0 & 62.6 & 63.1 & \multicolumn{1}{r|}{63.5}  \\
    \midrule
        \multirow{2}[2]{*}{$7$} & GC-LSN-E & Average Profit&
    \textbf{23.8} & \textbf{24.7} & \textbf{25.0} & \multicolumn{1}{r|}{\textbf{25.2}} &
    \textbf{60.4} & \textbf{62.4} & \textbf{63.4} & \multicolumn{1}{r|}{\textbf{63.9}}  \\
        & $\pi_B$   & Average Profit&
    21.0 & 22.7 & 23.7 & \multicolumn{1}{r|}{24.2} &
    58.2 & 60.3 & 61.2 & \multicolumn{1}{r|}{61.9}  \\
    \bottomrule    
    \end{tabular}%
       \caption{Average profit of GC-LSN-E compared to the best reported learning algorithm result, $\pi_B$, in the test instances taken from \citet{lyu2024}. We highlight the largest profit for each problem instance in \textbf{bold}.}
  \label{tab:comparisonresults}%
\end{table}%

To gain insights into how quickly GC-LSN-E can produce effective actions through demand distribution estimation, we compare it against benchmarks that have full information about the demand distribution, Cv.\ BSP. Cv.\ C-BSP, and GC-LSN. In Table \ref{tab:regretresults}, we report the relative profit gap of GC-LSN-E against these benchmarks using the same test bed employed by \citet{lyu2024}.

\begin{table}
  \centering
  \footnotesize
\begin{tabular}{|r|l|l|rrrrr|}
    \midrule
    \multicolumn{1}{|r|}{Policy} & Benchmark & Period  &  200     & 500     & 1000 & 2000  & 5000 \\
    \midrule
    \multirow{3}[2]{*}{GC-LSN-E} & Cv.\ BSP & Gap $\%$ &
    $-0.9\%$ & $-1.3\%$ & $-1.5\%$ &$-1.6\%$ & \multicolumn{1}{r|}{$-1.7\%$} \\
        & Cv.\ C-BSP  & Gap $\%$ &
    $2.5\%$ & $1.6\%$ & $1.2\%$ &$1.0\%$ & \multicolumn{1}{r|}{$0.8\%$} \\
       & GC-LSN  & Gap $\%$ &
    $2.8\%$ & $1.9\%$ & $1.5\%$ &$1.3\%$ & \multicolumn{1}{r|}{$1.1\%$} \\
    \bottomrule    
    \end{tabular}%
       \caption{Relative profit gap—lower is better—of GC-LSN-E vs.\ Cv.\ BSP, Cv.\ C-BSP and GC-LSN in \citet{lyu2024} instances. The benchmark policies are optimized under full information; GC-LSN-E uses parameter estimates. }
  \label{tab:regretresults}%
\end{table}%

One would expect this gap to be non-negative, as the benchmark policies operate with full information. Yet, Table~\ref{tab:regretresults} shows that GC-LSN-E can outperform Cv.\ BSP within as few as 200 periods, despite this informational disadvantage. Unlike typical online learning algorithms that aim to match BSP performance asymptotically, GC-LSN-E surpasses it rapidly. The regret against Cv.\ C-BSP starts at $2.5\%$ and steadily declines with horizon length, reflecting the success of the Kaplan–Meier estimator in estimating the true parameters. When compared to GC-LSN, the performance gap also decreases quickly and remains well below the worst-case bound in Theorem~\ref{thm:estimate}.

\vspace{-0.35cm}
\subsubsection{Benchmarking where there is no online learning algorithm available}
\vspace{-0.35cm}

Lastly, we evaluate the performance of GC-LSN-E on our extensive test beds, \emph{Case 1}, \emph{Case 2}, and \emph{Case 3}, when the problem parameters are unknown. It is important to note that there are no well-performing online learning algorithms available for \emph{Case 2} and \emph{Case 3}. For instance, \citet{Gong2023} investigate online learning for cyclic demands in lost sales inventory control with zero lead times, while \citet{ChenLyuYuanZhou2023} explore online learning for stochastic lead times, but studies the constant order policy, which performs poorly when the long-run average cost is sensitive to lead times.

For Case 1 and Case 2, we assume unknown demand distribution while for Case 3 we first consider unknown demand distribution with known lead time distribution, then unknown lead time distribution with known demand distribution, and lastly unknown demand and lead time distributions. This comprehensive approach allows us to assess the performance of GC-LSN-E across varied inventory challenges under different conditions of parameter uncertainty.

Table \ref{tab:regretresultscensoredcases} presents the relative profit gap and relative cost gap of GC-LSN-E compared to the Cv.\ BSP, Cv.\ C-BSP, and GC-LSN. We observe that the performance of GC-LSN-E improves as the number of periods increases, benefiting from more accurate estimates of demand and lead time distributions over time. In several instances, GC-LSN-E outperforms both Cv.\ BSP and Cv.\ C-BSP. Notably, GC-LSN-E maintains strong performance even when both demand and lead time distributions are unknown. These results underscore the efficacy of GC-LSN-E as a GCA under ZSG conditions. This capability is particularly evident in experimental setups where no other solution approaches are available, highlighting GC-LSN-E's robustness and adaptability in managing inventory challenges with incomplete information.

\begin{table}[ht]
\centering
\footnotesize
\begingroup
\setlength{\tabcolsep}{3.5pt}   
\renewcommand{\arraystretch}{0.95} 

\begin{adjustbox}{max width=\textwidth}
\begin{tabular}{c|c|rrrr|rrrr|rrrr|}
\cmidrule{3-14}
\multicolumn{1}{c}{} &
\multicolumn{1}{c|}{} &
\multicolumn{4}{c}{Cv.\ BSP} &
\multicolumn{4}{|c|}{Cv.\ C-BSP} &
\multicolumn{4}{c|}{GC-LSN} \\
\cmidrule{3-14}
\multicolumn{1}{c}{} &
\multicolumn{1}{c|}{} &
\multicolumn{1}{c}{$200$} &
\multicolumn{1}{c}{$500$} &
\multicolumn{1}{c}{$1000$} &
\multicolumn{1}{c|}{$2000$} &
\multicolumn{1}{c}{$200$} &
\multicolumn{1}{c}{$500$} &
\multicolumn{1}{c}{$1000$} &
\multicolumn{1}{c|}{$2000$} &
\multicolumn{1}{c}{$200$} &
\multicolumn{1}{c}{$500$} &
\multicolumn{1}{c}{$1000$} &
\multicolumn{1}{c|}{$2000$} \\
\midrule
\multirow{4}{*}{\shortstack{Case 1\\Unknown demand}} &
$p=9$  & $3.6$ & $0.1$ & $-1.1$ & $-1.7$ & $7.0$ & $2.9$ & $1.5$ & $0.8$ & $7.3$ & $3.2$ & $1.8$ & $1.1$ \\
& $p=39$ & $1.2$ & $0.4$ & $0.0$ & $-0.2$ & $1.8$ & $0.7$ & $0.4$ & $0.1$ & $2.0$ & $0.9$ & $0.5$ & $0.3$ \\
& $p=69$ & $1.0$ & $0.4$ & $0.1$ & $0.0$ & $1.2$ & $0.5$ & $0.3$ & $0.1$ & $1.3$ & $0.6$ & $0.3$ & $0.2$ \\
& $p=99$ & $0.8$ & $0.3$ & $0.1$ & $0.0$ & $1.0$ & $0.4$ & $0.2$ & $0.1$ & $1.0$ & $0.5$ & $0.3$ & $0.1$ \\
\midrule
\multirow{3}{*}{\shortstack{Case 2\\Unknown demand}} &
$K=3$  & $1.6$ & $-0.4$ & $-1.1$ & $-1.5$ & $3.5$ & $1.2$ & $0.4$ & $0.0$ & $4.1$ & $1.9$ & $1.1$ & $0.6$ \\
& $K=5$ & $2.2$ & $0.3$ & $-0.4$ & $-0.8$ & $3.5$ & $1.3$ & $0.5$ & $0.1$ & $4.2$ & $1.9$ & $1.1$ & $0.7$ \\
& $K=7$ & $2.7$ & $0.4$ & $-0.4$ & $-0.9$ & $4.2$ & $1.6$ & $0.7$ & $0.2$ & $5.0$ & $2.4$ & $1.4$ & $0.9$ \\
\midrule
\multirow{2}{*}{\shortstack{Case 3\\Unknown demand}} &
$l=0$  & $0.7$ & $-0.1$ & $-0.5$ & $-0.7$ & $1.4$ & $0.5$ & $0.1$ & $-0.1$ & $1.9$ & $0.9$ & $0.5$ & $0.3$ \\
& $l=1$ & $2.3$ & $1.0$ & $0.5$ & $0.2$ & $2.1$ & $1.0$ & $0.5$ & $0.3$ & $2.3$ & $1.1$ & $0.7$ & $0.5$ \\
\cmidrule{2-14}
\multirow{2}{*}{\shortstack{Case 3\\Unknown lead time}} &
$l=0$  & $-1.6$ & $-2.1$ & $-2.3$ & $-2.4$ & $-0.6$ & $-0.9$ & $-1.1$ & $-1.1$ & $1.4$ & $0.7$ & $0.4$ & $0.2$ \\
& $l=1$ & $3.5$ & $0.9$ & $-0.2$ & $-0.8$ & $0.4$ & $-0.5$ & $-0.9$ & $-1.1$ & $2.0$ & $1.0$ & $0.5$ & $0.2$ \\
\cmidrule{2-14}
\multirow{2}{*}{\shortstack{Case 3\\Unknown demand--lead time}} &
$l=0$  & $0.8$ & $-0.1$ & $-0.4$ & $-0.6$ & $1.5$ & $0.5$ & $0.1$ & $-0.1$ & $2.0$ & $0.9$ & $0.5$ & $0.3$ \\
& $l=1$ & $3.0$ & $1.3$ & $0.7$ & $0.3$ & $2.7$ & $1.2$ & $0.7$ & $0.4$ & $3.0$ & $1.4$ & $0.9$ & $0.6$ \\
\bottomrule
\end{tabular}
\end{adjustbox}
\endgroup

\caption{Relative profit gap (unknown demands) and relative cost gap (unknown lead time, known demands)—lower is better; \% omitted—of GC-LSN-E vs.\ Cv.\ BSP, Cv.\ C-BSP and GC-LSN. The benchmark policies are optimized under full information; GC-LSN-E uses parameter estimates.}
\label{tab:regretresultscensoredcases}
\end{table}

Moreover, the results also show that the performance gap between GC-LSN-E and GC-LSN consistently narrows as the horizon grows. This shrinkage is far more pronounced than the worst-case bound given in Theorem \ref{thm:estimate}, suggesting that the parameterization distance $d_P(\Pars,\hat{\Pars}_t)$ becomes small within a relatively short number of periods. This pattern also matches Corollary~\ref{cor:dp-consistency} in Appendix \ref{sec:estimatelostsales}: our demand and lead-time estimators are consistent, so the learned parameters approach the true ones as data accumulate and the policy’s performance converges to the clairvoyant benchmark (GC-LSN). The sustained decrease of the gap over longer horizons empirically implies a uniformly ergodic underlying chain under the deployed policy. While establishing such uniform ergodicity analytically is highly involved and beyond our scope, the experiments provide strong evidence in its favor.

\vspace{-0.35cm}
\section{Discussion and Ablation Experiments}\label{sec:discussion}
\vspace{-0.35cm}

GC-LSN provides exceptional performance in numerical settings, outperforming state-of-the-art benchmarks across diverse inventory challenges as a \emph{single} policy. We now discuss how this level of performance is achieved at the action level, focusing on how GC-LSN makes ordering decisions and how these differ from BSP and C-BSP. We also analyze the sensitivity of constructing the probable parameter space $\hat{\parSpace}$, particularly with respect to its bounds, and examine the robustness of GC-LSN when deployed in instances that slightly deviate from $\hat{\parSpace}$, i.e., when $\hat{\parSpace}$ is not a strict superset of $\parSpace$.

The success of GC-LSN across a variety of parameter settings lies in the training process. As emphasized in \S\ref{sec:prob} and \S\ref{sec:train}, GC-LSN is trained under the Super-MDP formulation using the Super-DCL algorithm, where a neural network agent learns a mapping from state–parameterization pairs to estimated optimal actions. While this mapping is a black box, it provides a powerful approximation mechanism that can capture complex nonlinear dependencies between states, parameters, and actions. This is particularly important in inventory problems such as lost sales inventory control, where deriving closed-form optimal policies is notoriously difficult \citep{zipkin2008old}.

To examine the underlying decision behavior of GC-LSN, we first compare the service levels (the fraction of demand met from stock) achieved by GC-LSN, BSP, and C-BSP across the three test cases in \S\ref{sec:benchmarkingknownparameters}. Interestingly, all three policies achieve almost identical service levels, matching within one percent across all cases (see Appendix~\ref{sec:ablation}). This demonstrates that GC-LSN’s advantage does not stem from trivially holding more inventory to reduce stockouts. Instead, its performance gains arise from a more efficient balance between holding and penalty costs. This suggests that GC-LSN avoids excessive overstocking while still preventing costly stockouts.

To gain further insights, we examine how policies adjust their actions when a parameter changes while the state remains fixed. The left plot of Figure~\ref{fig:actions} illustrates this experiment, where we vary the penalty cost $p \in \{9.0, 39.0, 69.0, 99.0\}$. While the actions scale with $p$ as expected, we observe three distinct behaviors: BSP exhibits more greedy adjustments, and C-BSP is more conservative because the orders are capped.  Notably, GC-LSN demonstrates a smooth nonlinear adjustment pattern, highlighting a black box behavior that is not apparent from average cost comparisons alone.

\begin{figure}[ht]
\centering
\includegraphics[width=1.0\textwidth]{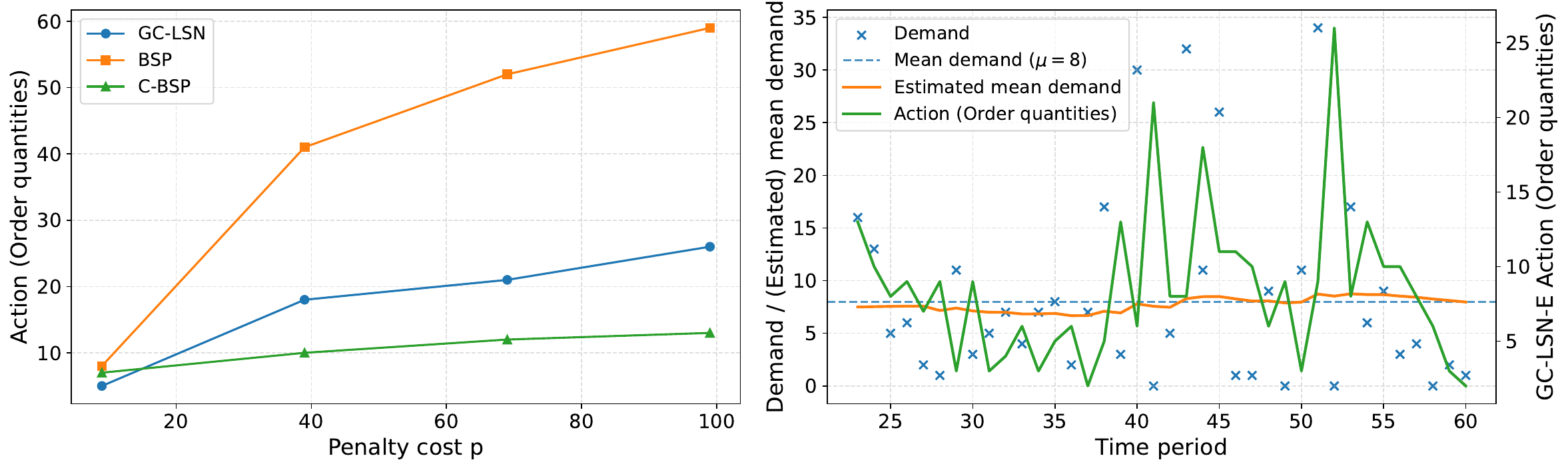}
\caption{\textbf{Left:} Order quantities placed by GC-LSN, BSP, and C-BSP for a fixed state (on hand inventory and each pipeline have 8 units) when the penalty cost $p$ increases. Demand ($K=1$) is geometric distributed with mean $8$ and lead time is 8. \textbf{Right:} Order quantities taken by GC-LSN-E when supplied with estimated demand distributions. Demand ($K=1$) is geometric distributed with mean $8$, lead time is 8, $p=69.0$.}
\label{fig:actions}
\end{figure}

Moreover, we analyze how GC-LSN-E takes actions when supplied with updated parameter estimates under varying demand realizations. The right plot of Figure~\ref{fig:actions} illustrates this experiment. We observe that the Kaplan–Meier estimator already produces an estimated mean demand close to the true mean after roughly 20 periods, even in a high-variance setting (geometric demand). While the order quantities of GC-LSN-E fluctuate with observed demands, extreme realizations do not lead to equally extreme actions: in periods of unusually high demand, GC-LSN-E typically orders less than the observed demand, whereas in periods of unusually low demand, it orders more. By contrast, BSP directly mirrors the observed demand, and C-BSP enforces a cap on orders. These differences indicate that GC-LSN-E avoids both excessive variability and over-conservative adjustments.

The results reported so far for GC-LSN concern instances (Case 1, Case 2, Case 3) contained in the probable parameter space $\hat{\parSpace}$ of the formulated Super-MDP. By Assumption~\ref{as:paramspace}, we have $\hat{\parSpace} \supseteq \parSpace$. In practice, while such bounds can be chosen using domain knowledge, the sensitivity of the trained GCA to these bounds is an important design choice. A wider $\hat{\parSpace}$ increases the chance of covering the true parameterization but requires more extensive sampling and domain randomization (\S\ref{sec:train}), leading to longer training times. Conversely, a smaller $\hat{\parSpace}$ reduces training effort but risks missing instances from the true parameter space.

Appendix~\ref{sec:exsetup} provides the detailed parameter ranges used in our experiments. To bound the penalty cost $p$ and mean demand $\mu$, we set $p_{\min}=2.0$, $p_{\max}=100.0$, $\mu_{\min}=2.0$, and $\mu_{\max}=12.0$. To quantify the trade-off, we train two additional GCAs under the same training setup (hyperparameters and computing power-hardware) as GC-LSN: \textbf{GC-LSN-Small}, with $p_{\max}=50.0$, and \textbf{GC-LSN-Large}, with $p_{\max}=200.0$ and $\mu_{\max}=25.0$. These agents are then tested on the same benchmark testbed (Cases 1, 2, 3). Training required \(\sim 12\) hours for GC-LSN-Small and \(\sim 24\) hours for GC-LSN-Large, with the differences arising from the larger action space, which increases both the simulation budget and the size of the network’s output layer.

We report the detailed results in Appendix~\ref{sec:ablation}. GC-LSN-Large performs slightly worse than GC-LSN but still outperforms C-BSP, except in stochastic lead times with order crossover. This performance could likely be improved with a larger compute budget, and since training is a one-time cost, adopting larger parameterizations at the expense of compute power can still be favorable in practice. In contrast, GC-LSN-Small achieves better performance in cases ($p=39.0$) fully contained in its smaller $\hat{\parSpace}$, since the sampled training instances are closer together, making generalization easier. However, when tested on instances outside this restricted $\hat{\parSpace}$, performance deteriorates. This illustrates the importance of choosing sufficiently large bounds, particularly when little is known about the true parameter ranges.

To measure the extent to which generalization occurs, we next evaluate the robustness of GC-LSN when the true parameters fall outside $\hat{\parSpace}$. Figure~\ref{fig:deviation} reports the relative cost gap between GC-LSN and C-BSP as we move the mean demand $\mu$ and penalty cost $p$ away from the training bounds $p_{\max}=100.0$ and $\mu_{\max}=12.0$. Within the training range, GC-LSN consistently outperforms C-BSP. Moreover, GC-LSN retains a superior performance for moderate deviations, e.g., up to $\mu=20.0$ and $p=250.0$. As $\mu$ or $p$ move further away, the relative cost gap increases, and GC-LSN’s performance degrades, eventually no longer improving over C-BSP. The deterioration in performance as $p$ increases is expected, since C-BSP is asymptotically optimal in $p$ \citep{Xin2021}.

\begin{figure}[ht]
\centering
\includegraphics[width=1.0\textwidth]{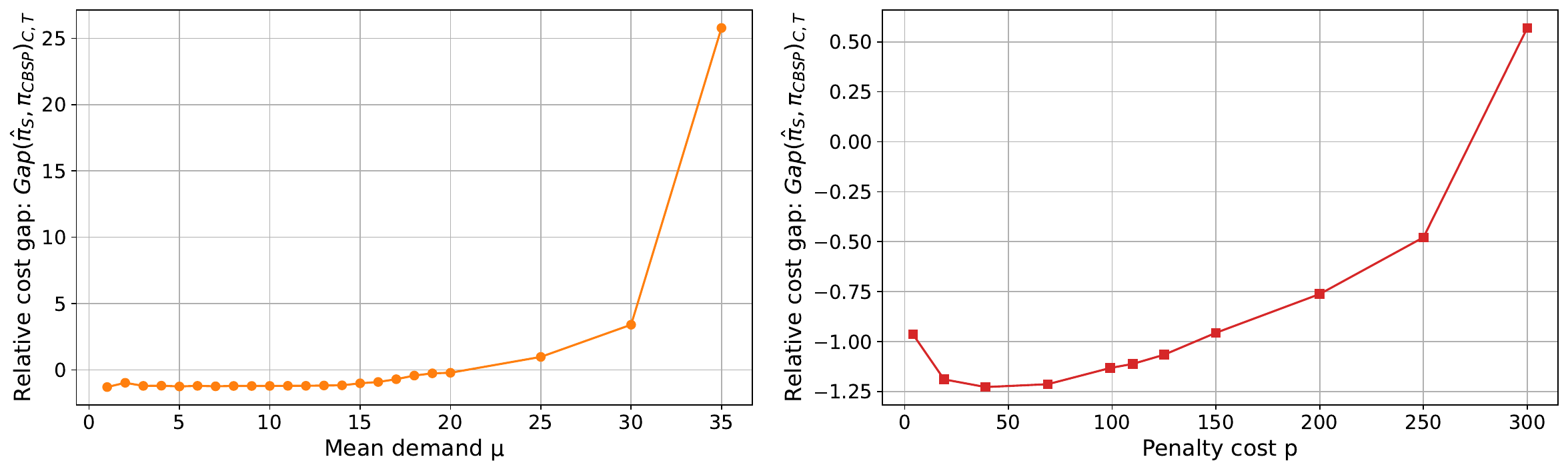}
\caption{Relative cost gap—lower is better; \% omitted—between GC-LSN and C-BSP for a Case~1 parametrization ($K=1$, geometric distributed demand, lead time is 8)  as the mean demand $\mu$ and penalty cost $p$ move beyond the training bounds $\mu_{\max}=12.0$ and $p_{\max}=100.0$.}
\label{fig:deviation}
\end{figure}

To gain more insights, we systematically vary the penalty cost $p \in \{1.0, 110.0, 125.0, 150.0\}$ and mean demand $\mu \in \{1.0, 13.0, 15.0, 20.0\}$, and average performance over the remaining Case~1 parameters to isolate the effects of $p$ and $\mu$ and reduce variability due to other parameter choices. Detailed numerical results for GC-LSN, GC-LSN-Large, and GC-LSN-Small are provided in Appendix~\ref{sec:ablation}. We confirm that GC-LSN retains good performance for mild deviations, but its effectiveness diminishes as parameters move further from the training range, and it underperforms C-BSP on average when $p=150.0$ or $\mu=15.0$. As expected, GC-LSN-Large performs better than GS-LSN on these instances since they fall within the expanded training range of GC-LSN-Large. For discrete parameters such as lead time and cycle length, robustness is more restrictive because they directly affect the featurization of the policy; we discuss these limitations in Appendix~\ref{sec:features}. We therefore conclude that GC-LSN exhibits reasonable robustness to mild deviations in the continuous parameters $p$ and $\mu$, although performance deteriorates as the true parameters move further away from the training range.

\vspace{-0.35cm}
\section{Conclusion}\label{sec:conclusion}
\vspace{-0.35cm}

In this study, we develop a solution framework designed to train GCAs that can make effective real-time decisions in inventory management problems. These agents are trained to handle ZSG settings, where estimates about the environmental factors might change unpredictably; yet, the agents do not require retraining. We introduce the concept of \emph{Super-MDPs} to address the formulation challenges encountered in these settings. Our \emph{TED} framework trains such GCAs and allows them to adapt in real-time for effective decision-making under ZSG. In numerical experiments focusing on the periodic review inventory control problem, our \emph{GC-LSN} policy outperforms the benchmarks.

From an industrial perspective, TED offers a viable pathway toward embedding DRL-based control into real-world inventory systems without incurring the retraining costs that currently limit deployment. By open-sourcing both the framework and trained weights, we lower the barrier for practitioners to experiment with and adopt state-of-the-art DRL solutions in their operations.

From an academic perspective, several research directions emerge. GC-LSN can serve as a ready benchmark in diverse inventory settings, particularly for evaluating new online learning methods in lost sales systems with cyclic demand and positive or stochastic lead times. Further work can also explore experimental cases for GC-LSN where demand distributions evolve but remain unknown, requiring alternative estimators beyond Kaplan–Meier (e.g., ARIMA-type processes). In addition, TED can be extended to other problem classes, such as service level agreement settings (cf. Appendix \ref{sec:keeplip}), where ensuring the Lipschitz property becomes more nuanced.

\vspace{-0.35cm}
\section*{Acknowledgements}
\vspace{-0.35cm}
Tarkan Temizöz conducted his research in the project DynaPlex: Deep Reinforcement Learning for Data-Driven Logistics, made possible by TKI Dinalog and the Topsector Logistics. We acknowledge the support of the SURF Cooperative using grant no. EINF-11324. 

\vspace{-1cm}
\begin{singlespace}
\bibliographystyle{apalike}
\bibliography{ref}
\end{singlespace}
\newpage
\appendix

\vspace{-0.35cm}
\section{Proof of Theorem \ref{thm:estimate}} \label{sec:proof}
\vspace{-0.35cm}

This theorem is a result of the Simulation Lemma in \citet{Kearns2002NearOptimalRL} for the Super-MDPs. The Simulation Lemma is a foundational result that provides a bridge between the behavior of a reinforcement learning policy in a simulated environment and its performance in the true environment. In our context, it implies that if the estimated parameters closely approximate the true parameters, any policy that is optimal (or near-optimal) in the true environment will also be near-optimal in the simulated environment. Therefore, our GCA can achieve zero-shot generalization, performing effectively even without precise knowledge of the true parameters.

\begin{proof} \emph{Bound on Cost Difference Due to Estimation Error.}

Our proof follows from applying the Simulation Lemma in the undiscounted, finite-horizon setting of \citet{lobel2024optimaltightnessboundsimulation} to the family of induced MDPs within the Super–MDP $\mathcal{M}_S$. They define two MDPs, $\mathcal{M} = \langle \mathcal{S}, \mathcal{A}, f, C \rangle$ and $\hat{\mathcal{M}} = \langle \mathcal{S}, \mathcal{A}, \hat{f}, \hat{C} \rangle$, which share a state-action space but differ in their transition and reward functions. Without loss of generality, we extend this concept by defining a Super-MDP $\mathcal{M}_S$ that encompasses these MDPs under distinct parameterizations $\Pars$ and $\hat{\Pars}$, hence yielding different cost and transition dynamics. 

We provide our proof in 3 steps.

\noindent\textbf{Step 1.}  

Denote $V^\pi_{T}(\s,\Pars)$ as the expected cumulative cost obtained by following policy $\pi$ from state $\s$ with parameterization $\Pars$ over $T$ time steps. Define the hybrid sequences:
\[
\mathbf{q}^{(t)} := \bigl(\hat{\Pars}_1,\ldots,\hat{\Pars}_t,\underbrace{\Pars,\ldots,\Pars}_{t+1:\,T}\bigr),
\qquad
V_T^\pi(\s; t) := V_T^\pi\bigl(\s;\mathbf{q}^{(t)}\bigr),
\]
so that $V_T^\pi(\s;0)=V_T^\pi(\s,\Pars)$ and $V_T^\pi(\s;T)=V_T^\pi(\s,\hat{\Pars}_{1:T})$.
Introduce the one–step differences:
\[
\Delta_t \;:=\; V_T^\pi(\s; t)-V_T^\pi(\s; t-1), \qquad t=1,\ldots,T.
\]
By telescoping, we have
\[
V_T^\pi(\s,\hat{\Pars}_{1:T})-V_T^\pi(\s,\Pars)
= \sum_{t=1}^{T}\!\bigl[V_T^\pi(\s; t)-V_T^\pi(\s; t-1)\bigr]
= \sum_{t=1}^{T}\Delta_t.
\]
Taking absolute values and applying the triangle inequality yields
\begin{equation}\label{eq:tel}
\bigl|V_T^\pi(\s,\hat{\Pars}_{1:T})-V_T^\pi(\s,\Pars)\bigr|
= \Bigl|\sum_{t=1}^{T}\Delta_t\Bigr|
\;\le\; \sum_{t=1}^{T}\bigl|\Delta_t\bigr|
= \sum_{t=1}^{T}\bigl|V_T^\pi(\s; t)-V_T^\pi(\s; t-1)\bigr|.
\end{equation}

\noindent\textbf{Step 2.}  

Fix $t \in \{1, \ldots, T\}$ and set the effective horizon $H := T - t + 1$. The functions $V_T^\pi(\s; t)$ and $V_T^\pi(\s; t-1)$ are identical (in terms of the MDP they belong to) except at step $t$: the former uses $(C^{\hat{\Pars}_t}, f^{\hat{\Pars}_t})$ at time $t$, the latter uses $(C^{\Pars}, f^{\Pars})$. Define the $(H-1)$-step total expected future cost under the true parameterization after step $t$,  
\[
W_{H-1}^\pi(\s',\Pars) := \mathbb{E} \left[ \sum_{i=t+1}^T C^{\Pars}\big(\s_i,\pi(\s_i, \Pars)\big) \ \bigg| \ \s_{t+1} = \s' \right].
\]

Expanding one step, conditioning on $\s_t$ and $a_t \sim \pi(\cdot \mid \s_t, \Pars_t)$, and subtracting gives  
\[
\left| V_T^\pi(\s; t) - V_T^\pi(\s; t-1) \right| \le \underbrace{\mathbb{E} \Big[\left| C^{\hat{\Pars}_t}(\s_t, a_t) - C^{\Pars}(\s_t, a_t) \right|\Big]}_{\le \ \epsilon_r(\Pars, \hat{\Pars}_t)}  + \underbrace{\mathbb{E} \Big[\left| \mathbb{E}_{f^{\hat{\Pars}_t}} [W_{H-1}^\pi] - \mathbb{E}_{f^{\Pars}} [W_{H-1}^\pi] \right|\Big]}_{\text{transition term}},
\]
where $\epsilon_r(\Pars, \hat{\Pars}_t) := \max_{\s,a} \left| C^{\Pars}(\s,a) - C^{\hat{\Pars}_t}(\s,a) \right|.$ The term $\mathbb{E}  \Big[ \left| C^{\hat{\Pars}_t}(\s_t, a_t) - C^{\Pars}(\s_t, a_t) \right| \Big]$ denotes the expected absolute difference in the immediate cost at step $t$ when evaluated under the estimated cost function $C^{\hat{\Pars}_t}$ versus the true cost function $C^{\Pars}$. The term $\mathbb{E} \Big[\left| \mathbb{E}_{f^{\hat{\Pars}_t}} [W_{H-1}^\pi] - \mathbb{E}_{f^{\Pars}} [W_{H-1}^\pi] \right|\Big]$ denotes the expected absolute difference in the $(H-1)$ step total expected future cost when the next-state distribution at step $t$ is generated using the transition function under estimated parameterization $f^{\hat{\Pars}_t}$ instead of the true one $f^{\Pars}$.

For the transition term, apply the total variation inequality.  
\[
\left| \mathbb{E}_P [g] - \mathbb{E}_Q [g]\right| \le \frac{1}{2} \| P - Q \|_1 \cdot (\sup g - \inf g)
\]
with $g = W_{H-1}^\pi$. Since costs are non-negative and bounded (cf. Definition \ref{def:mdp}), $0 \le W_{H-1}^\pi \le (H-1) C_{\max}(\Pars, \hat{\Pars}_t),$
hence  
\[
\left| \mathbb{E}_{f^{\hat{\Pars}_t}} [W_{H-1}^\pi] - \mathbb{E}_{f^{\Pars}} [W_{H-1}^\pi] \right| \le \frac{H-1}{2} \, C_{\max}(\Pars, \hat{\Pars}_t) \, \left\| f^{\hat{\Pars}_t}(\cdot \mid \s_t, a_t) - f^{\Pars}(\cdot \mid \s_t, a_t) \right\|_1.
\]

Maximizing over $(\s_t, a_t)$ yields
\[
\left| V_T^\pi(\s; t) - V_T^\pi(\s; t-1) \right|\le \epsilon_r(\Pars, \hat{\Pars}_t) + \frac{H-1}{2} \, \epsilon_f(\Pars, \hat{\Pars}_t)= \epsilon_r(\Pars, \hat{\Pars}_t) + \frac{T-t}{2} \, \epsilon_f(\Pars, \hat{\Pars}_t),
\]
where
\[
\epsilon_f(\Pars, \hat{\Pars}_t) := \max_{\s,a} C_{\max}(\Pars,\hat{\Pars}_t) \left\| f^{\Pars}(\cdot \mid \s,a) - f^{\hat{\Pars}_t}(\cdot \mid \s,a) \right\|_1.
\]

\noindent This is exactly the single-step expression of the finite-horizon simulation bound (cf. Proposition~B.1 of \citet{lobel2024optimaltightnessboundsimulation}); compare their $(H-1)/2$ factor.

If, moreover, the chain induced by $(f^{\Pars},\pi)$ is uniformly ergodic uniformly over $\Pars\in\hat{\parSpace}$, then the range of the $(H-1)$-step value is uniformly bounded \citep{ortner2024note}: there exists $\tilde{\kappa}(\pi)<\infty$ such that
\[
\sup_{\s'} W_{H-1}^{\pi}(\s',\Pars)-\inf_{\s'} W_{H-1}^{\pi}(\s',\Pars)\ \le\ \tilde{\kappa}(\pi)\qquad\text{for all $H$ and all $\Pars\in\hat{\parSpace}$}.
\]
Applying the same total-variation inequality with this bound replaces $(H-1)$ by $\tilde{\kappa}(\pi)$, i.e.,
\[
\left| \mathbb{E}_{f^{\hat{\Pars}_t}} [W_{H-1}^\pi] - \mathbb{E}_{f^{\Pars}} [W_{H-1}^\pi]\right|\ \le\ \tfrac{\tilde{\kappa}(\pi)}{2} C_{\max}(\Pars, \hat{\Pars}_t)\,\left\| f^{\hat{\Pars}_t}(\cdot \mid \s_t, a_t) - f^{\Pars}(\cdot \mid \s_t, a_t) \right\|_1,
\]
so the bound in this step becomes $\left| V_T^{\pi}(\s; t) - V_T^{\pi}(\s; t-1) \right| \le \epsilon_r(\Pars, \hat{\Pars}_t) + \tfrac{\tilde{\kappa}(\pi)}{2}\, \epsilon_f(\Pars, \hat{\Pars}_t),$ uniformly in $t$ and $T$.

\noindent\textbf{Step 3.}  

Combining Step 1 and Step 2,
\begin{equation} \label{eq:core-sum}
\big| V_T^\pi(\s, \hat{\Pars}_{1:T}) - V_T^\pi(\s, \Pars) \big| \le \sum_{t=1}^T \left(\epsilon_r(\Pars, \hat{\Pars}_t) + \frac{T-t}{2} \, \epsilon_f(\Pars, \hat{\Pars}_t) \right).
\end{equation}

By the definition of the parameterization distance (Definition \ref{def:param}) $d_{P}$,
\[
\epsilon_r(\Pars, \hat{\Pars}_t) \le d_{P}(\Pars, \hat{\Pars}_t), \quad \epsilon_f(\Pars, \hat{\Pars}_t) \le d_{P}(\Pars, \hat{\Pars}_t).
\]
Substituting into \eqref{eq:core-sum} and dividing by $T$ to obtain per period expected cost difference gives
\[
\left| \bar{C}_T^\pi(\s, \hat{\Pars}_{1:T}) - \bar{C}_T^\pi(\s, \Pars) \right| \le \frac{1}{T} \sum_{t=1}^T \left( 1 + \frac{T-t}{2} \right) d_{P}(\Pars, \hat{\Pars}_t),
\]
as claimed.

Under uniform ergodicity,
\[
\big| V_T^{\pi}(\s, \hat{\Pars}_{1:T}) - V_T^{\pi}(\s, \Pars) \big|\le \sum_{t=1}^T \!\left(\epsilon_r(\Pars,\hat{\Pars}_t) + \tfrac{\tilde\kappa(\pi)}{2}\,\epsilon_f(\Pars,\hat{\Pars}_t)\right).
\]
Dividing by $T$ and using $\epsilon_r,\epsilon_f\le d_P$ gives
\[
\left| \bar{C}_T^{\pi}(\s, \hat{\Pars}_{1:T}) - \bar{C}_T^{\pi}(\s, \Pars) \right|\le \frac{1}{T} \sum_{t=1}^T \left( 1 + \frac{\tilde\kappa(\pi)}{2} \right) d_{P}(\Pars, \hat{\Pars}_t)= \frac{\kappa(\pi)}{T}\sum_{t=1}^T d_{P}(\Pars,\hat{\Pars}_t),
\]
with $\kappa(\pi):=1+\tilde\kappa(\pi)/2$ .
\qedhere
\end{proof}

\vspace{-0.35cm}
\section{Robust optimization for Decide phase}\label{sec:robust-decide}
\vspace{-0.35cm}

We define the robust optimization model $\mathcal{Q}$, where the objective is to select a parameterization $\hat{\Pars}$ that minimizes the expected cumulative cost over a horizon $T$ given the worst-possible parameter realization $\Pars$, assuming no initial observations are available and a given GCA $\hat{\pi}_S$ is used. Here, $T$ covers the periods where we do not have any observation to estimate the parameters. For instance, we can only observe the demands after lead time if there is no stock available initially. The detailed formulation of $\mathcal{Q}$ is as follows:

\begin{itemize}
    \item \textbf{Decision Variable}:
    \[
    \hat{\Pars} \in \hat{\parSpace}
    \]
    
    \item \textbf{Objective Function}:
    \[
    \min_{\hat{\Pars} \in \hat{\parSpace}} \max_{\Pars \in \hat{\parSpace}} \mathbb{E}\left[ \sum_{t=0}^{T-1} C(\s_t, \pi(\s_t, \hat{\Pars})) \mid \Pars,\hat{\pi}_S \right]
    \]
    where $C(\s_t, \hat{\pi}_S(\s_t, \hat{\Pars}))$ represents the cost at time $t$ with state $\s_t$, parameter estimate $\hat{\Pars}$ and action $\hat{\pi}_S(\s_t, \hat{\Pars})$, given the true parameterization $\Pars$ and a policy $\pi$. 
    \item \textbf{Constraints}:
    \[
    \s_{t+1} = f^{\Pars}(\s_t, \hat{\pi}_S(\s_t, \hat{\Pars})) 
    \]
    with $\s_t \in \mathcal{S}$ and $\hat{\pi}_S(\s_t, \hat{\Pars}) \in \mathcal{A}$ for all $t \in \{0, \dots, T-1\}$, ensuring that actions taken are feasible for the estimated parameters.
    
\end{itemize}

This robust optimization model $\mathcal{Q}$ is designed to mitigate risk in the face of uncertainty about the true parameterization $\Pars$ during initial decision periods without any observations. Therefore, solution to $\mathcal{Q}$ can guide us when we lack parameter estimates due to the absence of data.

\vspace{-0.35cm}
\section{Train phase for periodic review inventory control}\label{sec:smdplostsales}
\vspace{-0.35cm}

We first explain the details for constructing a Super-MDP $\hat{\mathcal{M}}_S = (\hat{\parSpace}, \mathcal{S}, \mathcal{A}, \hat{\mathcal{H}}, \mathcal{F})$ for the periodic review inventory control problem. We then define the initial policy used in the Super-DCL algorithm. 

\noindent \textbf{Probable parameter space} $\hat{\parSpace}$

Recall that the parameters $\Pars$ related to the inventory problem discussed in \S\ref{sec:inventorymodel} are: $\Pars = (h, p, \boldsymbol\zeta, \boldsymbol\tau)$, where $h$ denotes the holding costs, $p$ the penalty costs, $\boldsymbol\zeta$ the specifics of the demand distribution, and $\boldsymbol\tau$ the specifics of the lead time distribution. The probable parameter space, denoted as $\hat{\parSpace}$, encompasses all possible combinations of these elements.

For the cost-related parameters $h$ and $p$, we maintain $h$ as a constant, set to $h=1$, while $p$ is variable, ranging within $[p_{min}, p_{max}]$, where both $p_{min}$ and $p_{max}$ are predefined scalars. 

Concerning the demand distribution, we adopt a structure capable of accommodating the cyclic nature of demand. Define $j$ as the subset of demands generated from the same distribution in the cycles, where $j = t \mod K$, $K$ is the cycle length, $K < K_{max}$, and $t$ is the time period. The mean demand for each subset is denoted by $\mu_j$, ranging from $[\mu_{min}, \mu_{max}]$, and the standard deviation of the demand by $\sigma_j$, which varies between $[\sigma_{min}(\mu_j), \mu_j \cdot 2]$. The demand distributions are assumed to be fitted based on the first two moments, following the methodology specified by \citet{adan1995}. The function $\sigma_{min}(\mu_j)$ provides the minimum feasible standard deviation for fitting such a distribution depending on the mean demand \citep[see][]{vanhezewijk2024nonnegative} while $\mu_j \cdot 2$ represents the upper limit, indicative of a highly variable distribution. To keep the support bounded, we truncate each fitted demand distribution by discarding outcomes whose probability is below a small threshold $\varepsilon>0$ and renormalize. Let $\mathcal{D}_{\mu_j, \sigma_j}$ denote the fitted demand distribution. The realized demand at period $t$ is distributed according to $\mathcal{D}_{\mu_j, \sigma_j}$ with $j = t \mod K$. We construct $\boldsymbol\zeta$ to encapsulate the mean and standard deviation of the demand for each cycle period, defined as $\boldsymbol\zeta = (\mu_0, \sigma_0, \dots, \mu_{K-1}, \sigma_{K-1})$. This structure ensures that $\boldsymbol\zeta$ captures the sufficient statistics of the cyclic demand distribution. 

We assume the lead time for a placed order takes values between $0$ and $L_{max}$. Let $l$ denote whether the orders cross or not, such that $l =  \mathbf{1}_{\{ordercrossing\}}$. Let $p_j$ denote the probability of an order placed in period $t$ will be received in period $t+j$, and we define $\mathbf{L} = \{p_{L_{max}},p_{L_{max}-1} \dots, p_0\}$, such that $\sum_{j=0}^{j=L_{max}} p_j = 1.0$. We construct the lead time parameter as $\boldsymbol\tau = (l, \mathbf{L})$. 

\noindent \textbf{Parameter generation function} $\hat{\mathcal{H}}$

We now focus on the parameter generation function $\hat{\mathcal{H}}$, which is used each time a new parameterization is generated, see Line \ref{alg1:paramgeneration} of the Algorithm \ref{alg:DCL}. Our aim is to create a comprehensive function that can uniformly cover the probable parameter space $\hat{\parSpace}$, hence the true parameter space $\parSpace$. We assume a uniform distribution for penalty cost $p$, demand cycle length $K$, and the mean demand and standard deviation of the demand for each subset of a cycle, $\mu_j$ and $\sigma_j$, such that $p \sim U(p_{min}, p_{max})$, $K \sim U[1, K_{max}]$, and $\mu_j \sim U(\mu_{min}, \mu_{max})$ and $\sigma_j \sim U(\sigma_{min}(\mu_j), \mu_j \cdot 2)$. 

For the lead time distribution, we first determine minimum lead time $l_{min}$ and maximum lead time $l_{max}$, such that $l_{min} \sim U[0, L_{max}]$ and $l_{max} \sim U[l_{min}, L_{max}]$. In case, $l_{min} = l_{max}$, the lead time is deterministic. If the lead times are stochastic, we assume order crossing with probability $0.5$. We set $p_j=0.0$ for $j<l_{min}$ and $j>l_{max}$. We determine the remaining individual lead time probabilities in one of the following ways with equal probability: 1) we assign the probabilities uniformly to $p_{l_{min}},.., p_{l_{max}}$, 2) we fit a truncated discrete distribution between $p_{l_{min}}$ and $p_{l_{max}}$ using the same method we use for demand distribution \citep[see][]{adan1995}) with mean $(p_{l_{min}} +p_{l_{min}}) / 2$, 3) we assign random probabilities while ensuring that the total probabilities sum to $1.0$ and each of $p_j$, $l_{min} \leq j \leq l_{max}$, is assigned a value.

The configurations adapted for designing the Super-MDP for the periodic review inventory control problem align with existing literature \citep{zipkin2000foundations}. These configurations also facilitate the maintenance of the Lipschitz conditions (see Definition~\ref{def:lip}) by constructing a smooth and continuous probable parameter space $\hat{\parSpace}$ and through the use of a linear cost function. Moreover, employing a parameter generation function $\hat{\mathcal{H}}$ that uniformly covers $\hat{\parSpace}$ can further improve training performance and strengthen ZSG ability.

\noindent \textbf{State and action spaces}, $\mathcal{S}$ and $\A$

We define other Super-MDP elements for the periodic review inventory control model, and continue with state and action spaces, $\mathcal{S}$ and  $\A$. We first define what a state is composed of. Recall that $OH_t$ denotes the on-hand inventory at the beginning of period $t$. Let the vector $\mathbf{x_t}$ denote the pipeline inventory, the orders that have not been received yet, and $o_t$ denotes the number of to-be-received orders placed $t$ periods ago. So we can define $\mathbf{x_t}$ as $\mathbf{x_t} = \{o_{t-L_{max}}, o_{t-L_{max}-1}, \dots,  o_{t-1}\}$. Let $K_t=t \mod K$ denote the cycle of the demand distribution at period $t$. By augmenting the state space with a period index, we transform any induced MDP into a time-homogeneous (stationary) one \citep[see][Section 2.3]{kallenberg2011markov}. The tuple $(OH_t, \mathbf{x_t}, K_t)$ denotes the state of the system in the period $t$.

We construct a state space $\mathcal{S}$ covering all possible state spaces related to parameterizations $\mathcal{S}^\Pars$, such that $\mathcal{S}^\Pars \subseteq \mathcal{S}$. Specifically, we set $\mathcal{S} = \mathcal{S}^{\Pars_{max}}$, where $\Pars_{max}$ is the parameterization with the largest penalty cost, demand distribution with the largest mean and standard deviation, and longest lead time.  

Actions represent the amount ordered in the system. For the action space $\A = \{0, 1, \dots, m\}$, we consider the same parameterization $\Pars_{max}$, and the maximum order quantity $m$ corresponds to the single-period newsvendor fractile bound \citep[see][]{zipkin2008old}.

\noindent \textbf{State transitions, costs and initial state for a given parameterization}, $f^\Pars$, $C^\Pars$ and $s_0^\Pars$

We analyze how the state $\s_t$ transitions to a new state $\s_{t+1}$. We update the on-hand inventory as $OH_{t+1} = \max\{0, OH_t + q_t + q_t^0 - D_t\}$. Let $q_{t,j}$ denote the order received in the period $t$ that is placed $j$ periods earlier, $1\geq j \geq L_{max}$ and $q_t = \sum_{j=1}^{j=L_{max}} = q_{t,j}$. We update the pipeline vector $\mathbf{x}_{t+1} = \{o_{t-L{max}-1} - q_{t,L{max}-1}, o_{t-L{max}-2} - q_{t,L{max}-2}, \dots, o_{t-1} - q_{t,1}, a_t - q_t^0 \}$. We have $K_{t+1}=(t+1) \mod K$. Thus, we can write the state transition matrix as follows:
\begin{align*}
f^{\Pars}(\s_t, a_t) &= (OH_{t+1}, \mathbf{x}_{t+1}, K_{t+1}) \\
&= (\max\{0, OH_t + q_t + q_t^0 - D_t\}, \{o_{t-L{max}-1} - q_{t,L{max}-1}, \dots, o_{t-1} - q_{t,1}, a_t - q_t^0 \}, K_{t+1}).
\end{align*}
Determining the exact values of $q_t$ and $q_t^0$ requires some straightforward but cumbersome formulation; readers may refer to \citet{bai2023asymptotic} and \citet{ChenLyuYuanZhou2023}.

After receiving the orders, placing new orders, and observing the demand, the system incurs costs at the end of a period with the following formulation:
\begin{align}
C^{\Pars}(\s_t, a_t) = h \cdot OH_{t+1} + p \cdot (D_t-(OH_t+q_t+q_t^0))^+, \label{eq:cost}
\end{align}
where the values $OH_t$, $OH_{t+1}$, $q_t$, and $q_t^0$ inherently depend on $\s_t$ and $a_t$, and the demand $D_t$ is drawn from the distribution $\mathcal{D}_{\mu_j, \sigma_j}$, with $j = t \mod K$.

For the initial states $\s_0^\Pars$, we follow the online learning in inventory systems literature and consider deterministic initial states. Specifically, we assume the initial on-hand inventory $OH_0 = 0$ and that there is no order in the pipeline. The system starts with the first action taken after observing the empty inventory.

\noindent \textbf{Initial policy for the Super-DCL algorithm} $\pi_0$ 

The Super-DCL algorithm requires an initial policy as an input, which it will iteratively refine to approximate the optimal policy. Let $I_{\textrm{max}}^\Pars$ denote the newsvendor fractile for cumulative demand over lead time, which bounds the optimal inventory position \cite[see][]{zipkin2008old} under parameterization $\Pars$. Then $\pi_0$ is set to be a capped base-stock policy with a base-stock level $I_{\textrm{max}}^\Pars$ and a maximum order quantity $m^\Pars_\s$, which is the single-period newsvendor fractile bound \citep{zipkin2008old}.

Given the constructed Super-MDP formulation for the periodic review inventory control problem and the initial policy $\pi_0$, we can train a GCA using the Super-DCL algorithm. Readers may refer to Appendix \ref{sec:exsetup} for the set of hyperparameters and the neural network structure required by the algorithm.

\vspace{-0.35cm}
\section{Estimate - Decide phases for periodic review inventory control} \label{sec:estimatelostsales}
\vspace{-0.35cm}

We explain how the interaction with the environment takes place, how observations are collected and used for estimating the parameters, and how decisions are made with the trained GCA. To this end, we present a set of assumptions:

\begin{assumption}[Interaction with the Environment]\label{as:4} 
The following assumptions guide us through the Estimate and Decide phases for the numerical experiments on our inventory problem. They are also commonly made in theory and in practice; see the related online inventory literature in \S\ref{sec:literature}.
\begin{itemize}
\item We have direct information on holding cost $h$ and the penalty cost $p$, so there is no estimation process for them.
\item For parameters for which no direct information is available - the demand and lead time distributions, we initially have no observation.
\item We know the cycle length $K$ for the demands. 
\item We know whether the lead times are deterministic (if so it is assumed to be known and orders do not cross, $l = 0$) or stochastic. If stochastic, we know whether the orders cross or not. 
\item For each received order, we have full information on when that particular order is placed: we can observe the individual lead times. It means that the pipeline inventory vector $\mathbf{x}$ can be updated accordingly.  
\item At the start of a period, we place orders using only information observed up to that time; the decision does not depend on the yet-unrealized demand, and thus the available-stock (censoring) threshold is fixed before demand is realized \citep[cf.][]{huh2011adaptive}.
\end{itemize}
\end{assumption}

We start with the decision-making strategy when there are no observations, and hence, no parameter estimates are available. Following the guidelines in \S\ref{sec:decide}, we aim to have a robust approach and minimize the costs in the worst-case scenario. Without explicitly formulating and solving the robust optimization problem $\mathcal{Q}$, presented in Appendix \ref{sec:robust-decide}, we greedily adopt a parameterization with deterministic demands equal to $\mu_{max}$ until the first sale or positive on-hand inventory (in case the demand distribution is unknown) and the largest deterministic lead time $L_{max}$ until we receive the first order arrival (in case the lead time distribution is unknown). This parameterization worked well in numerical experiments.

When we start collecting observations, related unknown parameters are estimated. For the unknown demand distribution, we utilize the Kaplan-Meier estimator \citep{kaplan1958nonparametric}. While initially proposed for estimating the survival function from lifetime data, the Kaplan-Meier estimator is also adopted in inventory problems. Interested readers may refer to \citet{huh2011adaptive, lyu2024} for the usage of it to build an empirical cumulative distribution function (CDF) from the censored demand observations. While we depart from them in having a cyclic demand distribution, the procedure stays the same as we know the cycle length $K$, see Assumption \ref{as:4}. So we repeat the process for each subset of demands within a cycle. We then derive the estimated mean demand $\hat{\mu}$ and the standard deviation of the demand $\hat{\sigma}$ from the empirical CDF to create and update the demand distribution estimate $\hat{\boldsymbol\zeta} = (\hat{\mu_0}, \hat{\sigma_0}, \dots, \hat{\mu}_{K-1}, \hat{\sigma}_{K-1})$.

The estimation for the lead time probabilities is rather straightforward. In particular, we build an empirical relative frequency distribution regarding how many periods it takes for an order to be received, and calculates the corresponding estimates for lead time probabilities $\hat{\mathbf{L}} = \{\hat{p}_{L_{max}},\hat{p}_{L_{max}-1} \dots, \hat{p}_0\}$. This estimation process is the same regardless of whether the order crosses or not, as that information is contained in $l$ of the parameter $\hat{\boldsymbol\tau}=(l,\hat{\mathbf{L}})$.

\subsection*{Consistency of the estimators}

\begin{proposition}[Kaplan–Meier Convergence for Censored, $K$-cyclic Demand]\label{prop:km-consistency}
Assume $K$-cyclic i.i.d. demand within each cycle, where each period’s decision uses only past orders (last point of Assumption~\ref{as:4}). Let $\widehat F_{t,j}$ be the Kaplan–Meier estimator for cycle $j$, and $F_{j}$ be the true CDF. Then, $\sup_{x\in \{0,\dots,D_{\max}\}  }|\widehat F_{t,j}(x)-F_j(x)| \xrightarrow{\text{a.s.}} 0$.
\end{proposition}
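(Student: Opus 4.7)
The plan has four steps: (i) decouple the $K$ cycles to reduce the problem to single-cycle censored estimation, (ii) verify a predictable (non-informative) censoring condition enabled by Assumption~\ref{as:4}, (iii) obtain pointwise a.s.\ consistency of $\widehat F_{t,j}$ at each atom via a martingale law of large numbers, and (iv) upgrade pointwise to uniform convergence using the finiteness of the demand support.

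First, I would fix $j\in\{0,\dots,K-1\}$ and restrict attention to the subsequence of periods $s\equiv j\pmod K$ up to time $t$. Since $K$ is known (Assumption~\ref{as:4}), $\widehat F_{t,j}$ uses exactly these observations; letting $C_s:=OH_s+q_s+q_s^0$ be the available stock after order arrivals, the relevant observations are the censored pairs $(X_s,\delta_s)=(\min(D_s,C_s),\mathbf{1}\{D_s\le C_s\})$. Because all ordering decisions at period $s$ use only start-of-period information (last bullet of Assumption~\ref{as:4}), $C_s$ is $\mathcal{F}_{s-1}$-measurable with respect to the natural filtration, and the $K$-cyclic i.i.d.\ assumption gives $D_s\sim F_j$ independent of $\mathcal{F}_{s-1}$. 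This is exactly the discrete analogue of the non-informative censoring condition used in classical Kaplan--Meier theory.

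Next, writing the estimator in its discrete Nelson--Aalen form
\[
1-\widehat F_{n,j}(x) \;=\; \prod_{y\le x}\!\Bigl(1 - \tfrac{\sum_{s}\mathbf{1}\{X_s=y,\delta_s=1\}}{\sum_{s}\mathbf{1}\{X_s\ge y\}}\Bigr),
\]
with sums over the $n=n_j(t)$ cycle-$j$ periods, predictable censoring yields
\[
\mathbb{E}[\mathbf{1}\{X_s=y,\delta_s=1\}\mid\mathcal{F}_{s-1}]=\mathbf{1}\{C_s\ge y\}\,f_j(y),\qquad \mathbb{E}[\mathbf{1}\{X_s\ge y\}\mid\mathcal{F}_{s-1}]=\mathbf{1}\{C_s\ge y\}(1-F_j(y-)).
\]
A bounded-martingale SLLN (Azuma/Chow) then shows that both the numerator and denominator, divided by $n$, converge a.s.\ to $p_y f_j(y)$ and $p_y(1-F_j(y-))$, respectively, where $p_y:=\liminf_n \tfrac1n\sum_s\mathbf{1}\{C_s\ge y\}$. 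On $\{p_y>0\}$ the ratio identifies the true discrete hazard $\lambda_j(y)=f_j(y)/(1-F_j(y-))$, and continuity of the finite product integral gives $\widehat F_{n,j}(x)\to F_j(x)$ a.s.\ at every atom. Because the support has at most $D_{\max}+1$ atoms, a finite union bound promotes pointwise to uniform a.s.\ convergence.

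The hard part will be verifying the support-coverage condition $p_y>0$ for every $y\le D_{\max}$, i.e., that the available stock $C_s$ reaches the top of the demand support on a set of positive asymptotic density along the cycle-$j$ subsequence. In the classical i.i.d.\ Kaplan--Meier setup this is automatic from independence of the censoring distribution, but here the thresholds $C_s$ are adaptive through the deployed policy. I would close this gap by exploiting structural features of the policy: the robust large-stock initialization of Appendix~\ref{sec:estimatelostsales} (which proxies the unknown mean by $\mu_{\max}$ until the first sale), together with the fact that GC-LSN's order cap $m^{\Pars}$ is the single-period newsvendor fractile, which exceeds $D_{\max}$ under the probable parameter space. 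A Borel--Cantelli-type argument on these "fully observed" periods then yields $p_y>0$ for all $y\le D_{\max}$ and completes the proof.
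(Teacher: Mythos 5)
Your steps (i), (ii), and (iv) coincide with the skeleton of the paper's proof: fix a cycle $j$, note that demands within the cycle are i.i.d., observe that by the last point of Assumption~\ref{as:4} the censoring level $OH_t+q_t+q_t^0$ is determined by past information so that censoring is non-informative, and pass from pointwise to uniform convergence via the finite support. Where you diverge is step (iii): the paper does not re-derive consistency at all, it invokes Theorem~2 of \citet{huh2011adaptive} as a black box, whereas you reconstruct the argument from the discrete Nelson--Aalen product and a martingale strong law. That reconstruction is essentially sound and is a legitimate, more self-contained route; one technical improvement is that you do not need the positive-density condition $p_y>0$ — by Levy's extension of the Borel--Cantelli lemma, the empirical hazard at $y$ converges a.s.\ on the event $\{\sum_s \mathbf{1}\{C_s\ge y\}=\infty\}$, so divergence of the at-risk counts suffices.

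The genuine gap is your closing argument for the coverage condition, which you correctly identify as the crux. The robust initialization of Appendix~\ref{sec:estimatelostsales} applies only \emph{until the first sale or positive on-hand inventory}, so it contributes finitely many high-stock periods almost surely; finitely many fully observed periods can never yield $p_y>0$, nor even $\sum_s\mathbf{1}\{C_s\ge y\}=\infty$. Likewise, the cap $m^{\Pars}$ is an \emph{upper} bound on the order quantity, not a lower bound on the post-delivery stock, so it gives no guarantee that the deployed policy keeps $C_s$ above a given level $y\le D_{\max}$ infinitely often (and a single-period newsvendor fractile need not exceed $D_{\max}$ in any case). Hence the Borel--Cantelli argument you sketch has nothing to feed on, and coverage remains unproven. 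To your credit, you have put your finger on precisely the condition that the paper's two-line proof leaves implicit inside the citation: a consistency theorem cannot hold for an \emph{arbitrary} adapted censoring sequence without some such requirement (consider a policy that always stocks zero). But flagging the condition and then closing it with an argument that fails is still a gap: completing your route requires a property of the closed loop (estimator plus policy) ensuring that stock levels reach every support point infinitely often — for instance a self-correcting mechanism of the kind established for the KM-myopic policy in \citet{huh2011adaptive} — or else a weakening of the claim to uniform convergence over the random set of levels that are at risk infinitely often.
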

\begin{proof}
Fix $j$. Within cycle $j$ the demands are i.i.d.\ (discrete) and, by Assumption~\ref{as:4}, the censoring level $OH_t+q_t+q_t^0$ is chosen from past information, so censoring is independent of the demand. Therefore, the conditions of Theorem~2 in \citet{huh2011adaptive} hold for cycle $j$, which yields $\sup_{x\in \{0,\dots,D_{\max}\} }|\widehat F_{t,j}(x)-F_j(x)| \to 0$ almost surely. 
\end{proof}

\begin{proposition}[Relative-Frequency Distribution for Lead Times]\label{prop:lt-consistency}
Let $\widehat{\mathbf L}_t$ be the empirical relative-frequency estimator of $\mathbf L$. Then $\|\widehat{\mathbf L}_t-\mathbf L\|_1\to 0$ almost surely as $t\to\infty$.
\end{proposition}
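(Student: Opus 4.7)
The plan is to reduce the $\ell_1$ convergence to pointwise convergence of each component $\hat p_{j,t}\to p_j$ and then exploit the finiteness of the lead-time support $\{0,1,\dots,L_{\max}\}$. Specifically, since
\[
\|\widehat{\mathbf L}_t-\mathbf L\|_1 \;=\; \sum_{j=0}^{L_{\max}}\bigl|\hat p_{j,t}-p_j\bigr|
\]
is a finite sum, an almost-sure limit of each summand to zero immediately yields the desired $\ell_1$ convergence, because a finite union of null sets is null.

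For the pointwise step, I would proceed as follows. Let $N_t$ denote the number of orders whose arrivals (and hence lead times) have been observed by the end of period $t$. In the periodic-review inventory setting of \S\ref{sec:inventorymodel}, under any policy that places orders infinitely often -- in particular our deployed $\hat\pi_S$, which issues at least one order whenever inventory is insufficient to cover the worst-case demand -- we have $N_t\to\infty$ almost surely. Let $(\tau_i)_{i\ge 1}$ be the observed lead times, indexed by order placement. In the order-crossover regime these are i.i.d. draws from $\mathbf L$ by construction; in the non-crossover regime they are the order statistics of an i.i.d. sequence, and one can equivalently work with the i.i.d. underlying draws since the empirical frequency of each value is invariant under reordering. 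Thus $\hat p_{j,t} = N_t^{-1}\sum_{i=1}^{N_t}\mathbf{1}\{\tau_i=j\}$.

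I would then apply Kolmogorov's Strong Law of Large Numbers to the bounded i.i.d. sequence $(\mathbf{1}\{\tau_i=j\})_{i\ge 1}$, giving $n^{-1}\sum_{i=1}^{n}\mathbf{1}\{\tau_i=j\}\to p_j$ almost surely. To convert this to a statement about the random index $N_t$, I would use the standard fact that if $a_n\to a$ and $N_t\to\infty$ almost surely, then $a_{N_t}\to a$ almost surely on the intersection of the two full-measure events. This yields $\hat p_{j,t}\to p_j$ a.s.\ for each $j\in\{0,\dots,L_{\max}\}$, and summing over the finite support completes the proof.

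The main obstacle I anticipate is the subtlety in the non-crossover regime: the observed sequence of lead times is not a raw i.i.d. sample but rather a reordered version. This is only a cosmetic issue for \emph{empirical marginals}, which are permutation-invariant, so the SLLN still applies directly; however, establishing this rigorously requires fixing a coupling between the intrinsic i.i.d. draws and the observed ones, or alternatively invoking an ergodic theorem for the stationary observed process. A secondary concern is guaranteeing $N_t\to\infty$ under all deployment scenarios, which follows from the structural properties of $\hat\pi_S$ (bounded action space with orders placed whenever stock is low) but should be stated explicitly.
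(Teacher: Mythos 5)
Your proposal follows essentially the same route as the paper: the paper's entire proof is the observation that the support $\{0,\dots,L_{\max}\}$ is finite and that each empirical frequency $\widehat p_{t,\ell}$ converges to $p_\ell$ almost surely by the Strong Law of Large Numbers. Your reduction of $\ell_1$ convergence to componentwise convergence, the SLLN step, and the random-index argument (requiring $N_t\to\infty$ a.s., which you correctly note depends on the policy placing orders infinitely often) are a more careful rendering of exactly this argument; the paper states none of these refinements explicitly.

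One caution on the branch you flag yourself: your claim that in the non-crossover regime the observed lead times are merely a \emph{reordered} i.i.d.\ sample, so that permutation invariance of empirical marginals rescues the SLLN, is not correct as stated. Enforcing no crossover does not permute arrival epochs while preserving lead-time values; it changes the lead times themselves (an order that would overtake its predecessor is effectively delayed), and in the standard model of \citet{kaplan1970} the lead times are dependent functionals of a common i.i.d.\ driving sequence rather than a shuffled i.i.d.\ sample. The conclusion still holds, but the correct justification is the one you offer as a fallback: the observed lead-time sequence is stationary and ergodic (being a fixed measurable function of shifts of an i.i.d.\ sequence), so Birkhoff's ergodic theorem delivers $\widehat p_{t,\ell}\to p_\ell$ a.s. In this respect your proof, once routed through the ergodic theorem, is actually more complete than the paper's, which cites the i.i.d.\ SLLN without distinguishing the two lead-time regimes.
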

\begin{proof}
Since there are finitely many $\ell\in\{0,\ldots,L_{\max}\}$, the empirical frequencies converge almost surely to the true probabilities for each $\ell$: $\widehat p_{t,\ell} \xrightarrow{\text{a.s.}} p_\ell$, by the Strong Law of Large Numbers.
\end{proof}

\begin{corollary}[Consistency in $d_P$ from Propositions~\ref{prop:km-consistency}–\ref{prop:lt-consistency}]
\label{cor:dp-consistency}
Let $\Pars=(h,p,\boldsymbol\zeta,\boldsymbol\tau)$ and $\hat{\Pars}_t=(h,p,\hat{\boldsymbol\zeta}_t,\hat{\boldsymbol\tau}_t)$, with $h,p$ observed and $\hat{\boldsymbol\zeta}_t,\hat{\boldsymbol\tau}_t$ estimated as in Propositions~\ref{prop:km-consistency}–\ref{prop:lt-consistency}. The corresponding Super-MDP is Lipschitz in the sense of Definition~\ref{def:lip}. Then
\[
d_P(\Pars,\hat{\Pars}_t)\ \xrightarrow[t\to\infty]{\mathbb{P}}\ 0 .
\]
\end{corollary}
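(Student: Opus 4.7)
The plan is to reduce the claim to a simpler statement in the parameter space: introduce a metric $d$ on $\hat{\parSpace}$ tailored to the parameters actually being estimated, show that $d(\Pars,\hat{\Pars}_t)\to 0$ almost surely using Propositions \ref{prop:km-consistency}--\ref{prop:lt-consistency}, and then invoke the Lipschitz assumption of Definition \ref{def:lip} to conclude that $d_P(\Pars,\hat{\Pars}_t)\to 0$ almost surely, which in particular implies convergence in probability as required.

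First, since $h$ and $p$ are observed and identical in $\Pars$ and $\hat{\Pars}_t$, only the demand and lead time components contribute. A natural metric is
\[
d(\Pars,\hat{\Pars}_t) := \max_{j\in\{0,\ldots,K-1\}}\|F_j-\hat F_{t,j}\|_\infty \;+\; \|\mathbf L-\hat{\mathbf L}_t\|_1,
\]
where $F_j$ is the true CDF of the demand in cycle position $j$ and $\hat F_{t,j}$ its Kaplan--Meier estimate, and $\mathbf L$, $\hat{\mathbf L}_t$ are the lead-time pmfs. By Proposition \ref{prop:km-consistency}, each $\|F_j-\hat F_{t,j}\|_\infty\to 0$ almost surely; since the cycle length $K$ is finite and known (Assumption \ref{as:4}), the maximum over $j$ also vanishes almost surely. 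By Proposition \ref{prop:lt-consistency}, the second term converges to $0$ almost surely. Hence $d(\Pars,\hat{\Pars}_t)\to 0$ a.s.

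Next, I apply the Lipschitz assumption of the corollary with respect to this metric. Definition \ref{def:lip} yields constants $L_r,L_f$ with
\[
|C^{\Pars}(\s,a)-C^{\hat{\Pars}_t}(\s,a)|\le L_r\,d(\Pars,\hat{\Pars}_t), \qquad \|f^{\Pars}(\cdot\mid\s,a)-f^{\hat{\Pars}_t}(\cdot\mid\s,a)\|_1\le L_f\,d(\Pars,\hat{\Pars}_t).
\]
Moreover, $C_{\max}(\Pars,\hat{\Pars}_t)$ is uniformly bounded over $\hat{\parSpace}$ by some $\bar C_{\max}<\infty$: the cost formula \eqref{eq:cost} is linear in quantities that lie in the bounded sets $\mathcal{S}=\mathcal{S}^{\Pars_{\max}}$ and $\mathcal{A}$, while $p\le p_{\max}$ and demands are truncated at $D_{\max}$. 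Substituting into Definition \ref{def:param},
\[
d_P(\Pars,\hat{\Pars}_t)\;\le\;\bigl(L_r+\bar C_{\max}\,L_f\bigr)\,d(\Pars,\hat{\Pars}_t),
\]
so $d_P(\Pars,\hat{\Pars}_t)\to 0$ almost surely, and therefore also in probability.

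The main subtlety is not in the logical chain itself but in aligning the metric appearing in the Lipschitz assumption with the one in which our estimators are known to converge. Kaplan--Meier delivers uniform (Kolmogorov--Smirnov) convergence of CDFs; because the demand support $\{0,\ldots,D_{\max}\}$ and lead-time support $\{0,\ldots,L_{\max}\}$ are finite, this is equivalent (up to constants depending on $D_{\max}$) to $\ell_1$ convergence of pmfs, which is exactly the quantity that controls $\|f^{\Pars}-f^{\hat{\Pars}_t}\|_1$ and one-step cost differences in \eqref{eq:cost}. Any reasonable metric consistent with the Lipschitz assumption is thus dominated by the $d$ defined above, so no further technical effort is required.
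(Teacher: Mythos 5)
Your proposal is correct and follows essentially the same route as the paper's proof: reduce the claim to almost-sure convergence of the estimated demand and lead-time components via Propositions~\ref{prop:km-consistency}--\ref{prop:lt-consistency}, then combine the Lipschitz property (Definition~\ref{def:lip}) with the definition of $d_P$ (Definition~\ref{def:param}) to obtain $d_P(\Pars,\hat{\Pars}_t)\le\bigl(L_r+C_{\max}\,L_f\bigr)\cdot(\text{parameter distance})\to 0$ a.s., hence in probability. The only cosmetic difference is your choice of intermediate metric: you measure demand error by the sup-norm on CDFs, whereas the paper---because $\boldsymbol\zeta$ is a moment vector feeding a two-moment fit---passes from CDF convergence to convergence of $(\widehat\mu_{t,j},\widehat\sigma_{t,j})$ via the continuous mapping theorem and uses $\Delta_t=\|\hat{\boldsymbol\zeta}_t-\boldsymbol\zeta\|_1+\|\hat{\boldsymbol\tau}_t-\boldsymbol\tau\|_1$; on the finite supports involved these notions are equivalent, as your closing paragraph correctly observes.
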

\begin{proof}
Because $h$ and $p$ are observed, the distance $d_P(\Pars,\hat{\Pars}_t)$ depends only on the demand and lead-time distributions $(\boldsymbol\zeta, \boldsymbol\tau)$ versus $(\hat{\boldsymbol\zeta}_t, \hat{\boldsymbol\tau}_t)$.

Proposition~\ref{prop:km-consistency} yields, for each cycle $j$, $\sup_{x\in\{0,\dots,D_{\max}\}}|\widehat F_{t,j}(x)-F_j(x)|\to0$ a.s.; on a finite grid this implies the pmf converges coordinatewise, hence by the continuous mapping theorem $(\widehat\mu_{t,j},\widehat\sigma_{t,j})\to (\mu_j,\sigma_j)$ a.s., and therefore $\|\hat{\boldsymbol\zeta}_t-\boldsymbol\zeta\|_1\to0$ a.s. By Proposition~\ref{prop:lt-consistency}, $\|\hat{\boldsymbol\tau}_t-\boldsymbol\tau\|_1\to0$ a.s. By the Lipschitz Super-MDP property (Definition~\ref{def:lip}) and the definition of $d_P$ (Definition~\ref{def:param}), for all $(\s,a)$,
\[
|C^{\Pars}(\s,a)-C^{\hat{\Pars}_t}(\s,a)|\le L_r\Delta_t,\quad\|f^{\Pars}(\cdot|\s,a)-f^{\hat{\Pars}_t}(\cdot|\s,a)\|_1\le L_f\Delta_t,
\]
where $\Delta_t:=\|\hat{\boldsymbol\zeta}_t-\boldsymbol\zeta\|_1+\|\hat{\boldsymbol\tau}_t-\boldsymbol\tau\|_1$. Hence, by Definition~\ref{def:param},
\[
d_P(\Pars,\hat{\Pars}_t)\ \le\ \big(L_r+C_{\max}(\Pars,\hat{\Pars}_t)L_f\big)\,\Delta_t.
\]
Since $\Delta_t\to0$ a.s., the right-hand side $\to0$ a.s., and thus $d_P(\Pars,\hat{\Pars}_t)\to0$ a.s. (hence in probability).
\end{proof}

\vspace{-0.35cm}
\section{Experimental Setup}\label{sec:exsetup}
\vspace{-0.35cm}

Table~\ref{tab:lostsalessetup} reports the bounds we adopted for the probable parameter space of the periodic-review lost sales inventory control problem. These bounds are not arbitrarily chosen, and the success of the TED framework does not depend on their exact values provided sufficient compute is available for \emph{extensive sampling} and \emph{domain randomization}. The selected ranges are intended to reflect practically relevant environments while keeping the training of GCAs computationally feasible. At the same time, they should be viewed as representational choices for testing TED in numerical settings, rather than as strict prescriptions for real-world applications.

\begin{table}[ht]
  \centering
  \footnotesize
  \begin{tabular}{|c|c|c|c|c|c|c|}
    \bottomrule
$h$ & $p_{min}$ & $p_{max}$ & $\mu_{min}$ & $\mu_{max}$ & $K_{max}$ & $L_{max}$ \\
    \bottomrule
$1.0$ & $2.0$ & $100.0$ & $2.0$ & $12.0$ & $7$ & $10$\\
    \bottomrule
    \end{tabular}
\caption{Bounds for the probable parameter space for periodic review inventory control problem.}
\label{tab:lostsalessetup}
\end{table}

We normalize the holding cost to $h=1.0$, and set $p_{\min}=2.0$ since in lost sales systems, penalty costs are typically higher than holding costs; otherwise, holding no inventory would be favorable \citep[cf.][]{zipkin2008old,Xin2021}. We cap the penalty cost at $p_{\max}=100.0$ and lead time at $L_{max}=10$, as in regimes with very high penalty costs and long lead times, the capped base-stock policy is known to be asymptotically optimal \citep{Xin2021}, reducing the benefit of a sophisticated dynamic policy. For the demand distribution, we use $\mu_{\min}=2.0$ to avoid sparse demand levels, and set $\mu_{\max}=12.0$ for computational tractability, since the size of the action space grows with demand, and larger values would require extensive sampling (\S\ref{sec:train}) and significantly higher compute. A possible extension would be to introduce a low/high demand parameter (such as $l$ which shows whether orders cross or not), where in high-demand settings, a single action corresponds to a batch order with predefined quantities. We set $K_{\max}=7$ to reflect weekly demand cycles. 

Table \ref{tab:hyperparameters} presents the set of hyperparameters used to train the Super-DCL algorithm in our numerical experiments. For a comprehensive overview of the hyperparameters utilized, we direct readers to \citet{temizoz2025deep}. We employed the identical neural network architecture described in their study and adhered to the selection criteria for sampling and simulation-related hyperparameters outlined therein. To this end, we increased the number of samples $N$ and the number of approximate policy iterations $n$ for performance gains, and decreased the depth of the simulations $H$ and simulation budget per state-action pairs $M$ for computational efficiency. We refrained from further hyperparameter tuning. Departing from that study, we introduce two new hyperparameters: $R$, the number of samples collected for a parameterization; $P$, the maximum number of promising actions for simulations \citep[see][for an example usage]{danihelka2022policy}. 

\begin{table}[ht]
  \centering
  \footnotesize
    \begingroup
  \footnotesize
  \setlength{\tabcolsep}{3.5pt}
  \renewcommand{\arraystretch}{0.95}

  \begin{adjustbox}{max width=\textwidth}
  \begin{tabular}{|ll|ll|}
\midrule
\multicolumn{2}{|c|}{Sampling and Simulation} & \multicolumn{2}{c|}{Neural Network Structure $\NN_\theta$} \\
\midrule
Depth of the simulations:       &$H = 21$   & Number of layers:  & 4\\ 
Simulation budget per state-action pairs:    &$M = 500$ & Number of neurons: & \{ 256, 128, 128, 128 \}  \\
Number of samples:   &$N = 5000000$ & Optimizer:       & Adam \\
Length of the warm-up period:       &$L = 100$   &Mini-batch size:  & $MiniBatchSize = 1024$\\
Number of samples collected for a parameterization:     &  $R=100$ & Maximum epoch: & $100$\\
Maximum number of promising actions: & $P=16$ & Early stopping patience & $15$ \\
\midrule
\multicolumn{4}{|c|}{Number of approximate policy iterations: $n = 5$} \\
    \bottomrule
    \end{tabular}
\end{adjustbox}
\endgroup
\caption{The hyperparameters of the Super-DCL algorithm used in the experiments.}
\label{tab:hyperparameters}
\end{table}

\vspace{-0.35cm}
\section{Featurization of GC-LSN}\label{sec:features}
\vspace{-0.35cm}

Below we present the features for training GC-LSN. Using GC-LSN, we can take actions for any product at time $t$ with the estimated parameters. The features contain the following elements:

\begin{enumerate}
    \item $l$, whether the orders can cross. 
    \item $p$, penalty cost for each unmet demand.
    \item $OH_t+q_t$, on-hand inventory at the beginning of period $t$ + received orders in period $t$ before taking an action.
    \item Pipeline inventory vector, $\{o_{t-L{max}-1} - q_{t,L{max}-1}, o_{t-L{max}-2} - q_{t,L{max}-2}, \dots, o_{t-1} - q_{t,1} \}$.
    \item $\hat{\mathbf{L}}$, estimated lead time probabilities.
    \item $K$, demand cycle length.
    \item $\hat{\boldsymbol\zeta}_t^f$, features for estimated demand distribution statistics.    
\end{enumerate}

Note that features 5 and 7 are based on estimations. If the corresponding true parameters are known, we use them directly. Since the input size of the neural networks stays the same during training and demand cycle lengths $K$ vary across different parameterizations, we adapt $\hat{\boldsymbol\zeta}_t^f$, estimated demand distribution statistics (mean and standard deviation) for the next $K_{max}$ periods. Hence, regardless of the cycle length, $|\hat{\boldsymbol\zeta}_t^f|$ stays the same. This representation is analogous to augmenting the state with $K_t$ and directly including $\boldsymbol\zeta$ as part of the feature vector.

The bounds reported in Appendix \ref{sec:exsetup} directly affect this featurization, particularly through the discrete parameters. For instance, the maximum lead time $L_{\max}$ determines the length of the pipeline inventory vector and the estimated lead time probabilities $\hat{\mathbf{L}}$, while the maximum cycle length $K_{\max}$ supports the dimension of the estimated demand distribution features $\hat{\boldsymbol\zeta}_t^f$. Consequently, if a real-world instance exhibits longer lead times than $L_{\max}$ or cycle lengths beyond $K_{\max}$, GC-LSN cannot prescribe valid actions because the corresponding features are not represented in its input space. In contrast, for continuous parameters such as the penalty cost $p$ or mean demand $\mu$, GC-LSN can still map state–parameterization pairs that fall outside of $\hat{\parSpace}$, since these parameters only scale numerical inputs rather than change the structure of the feature vector. However, in such extrapolations, the performance is likely to deteriorate, as the policy has not been trained to generalize to those regions of the probable parameter space.

\vspace{-0.35cm}
\section{Instances of the Numerical Experiments}\label{sec:instances}
\vspace{-0.35cm}

Figure~\ref{fig:horizonlength} provides a representation of the probable parameter space and the instances used in Case 1 of our experiments. The horizontal axis shows the penalty cost $p$, while the vertical axis shows the mean demand $\mu$ of the geometric demand distribution, under the case of a deterministic lead time of 6. Blue dots indicate representative training samples, while orange stars mark the parameter settings used in the test set. This visualization is intended to illustrate how the sampled training instances cover the probable parameter space $\hat{\parSpace}$ and how the test instances are distributed within the same space. While the training samples are not dense, they are spread in a way that provides broad uniform coverage of $\hat{\parSpace}$.

\begin{figure}[ht]
\centering
\includegraphics[width=1.0\textwidth]{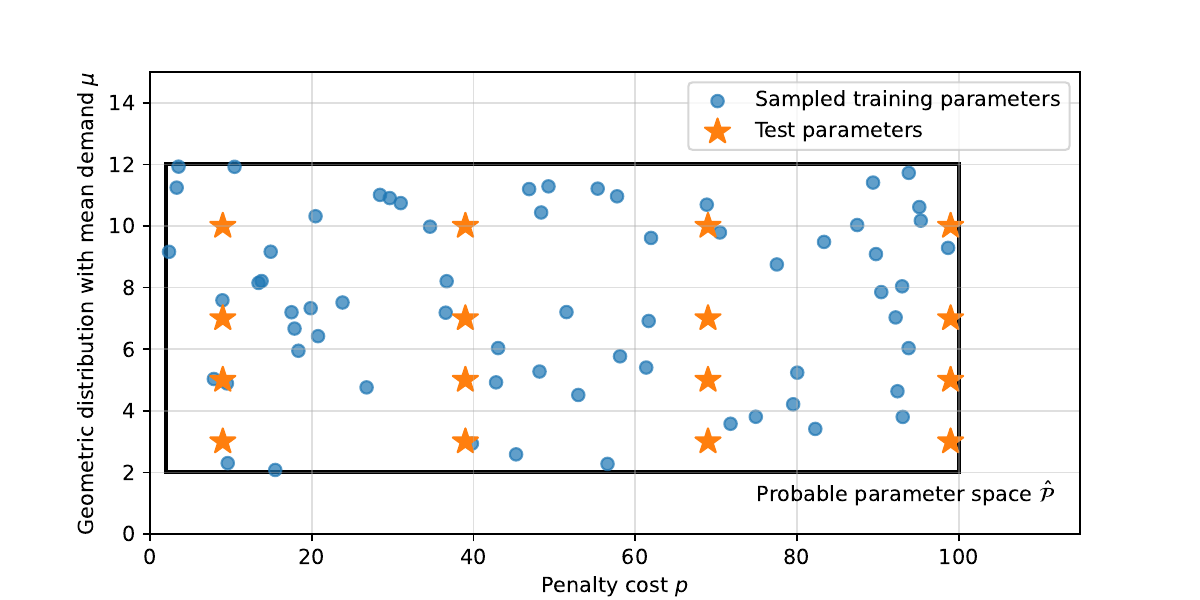}
\caption{Two-dimensional representation of the probable parameter space $\hat{\parSpace}$ for Case~1 instances with deterministic lead time of $6$ and geometrically distributed demand.}
\label{fig:horizonlength}
\end{figure}

\noindent \textbf{Instances of Case 2}

For Case 2, we have the following mean demands according to varying cycle length: 
\begin{itemize}
    \item For $K=3$, we have: $\{ \mu_1=2.5, \mu_2=4.5, \mu_3=3.0 \}, \{ \mu_1=9.0, \mu_2=11.0, \mu_3=9.5 \}, \{ \mu_1=3.0,\mu_2=6.0, \mu_3=10.0 \}$
    \item For $K=5$, we have: $\{ \mu_1=4.0, \mu_2=2.5, \mu_3=3.0, \mu_4=4.5, \mu_5=3.0 \}, \{ \mu_1=7.5, \mu_2=11.5, \mu_3=10.0, \mu_4=9.5, \mu_5=8.5 \}, \{ \mu_1=11.0, \mu_2=3.0, \mu_3=5.0, \mu_4=8.0, \mu_5=6.0 \}$
    \item For $K=7$, we have: $\{ \mu_1=3.0, \mu_2=4.0, \mu_3=2.0, \mu_4=3.5,\mu_5= 4.5, \mu_6=2.5, \mu_7=4.0 \}, \{ \mu_1=11.0,\mu_2= 10.0, \mu_3=9.0, \mu_4=10.0, \mu_5=11.0, \mu_6=8.5,\mu_7= 9.5 \}, \{ \mu_1=3.0, \mu_2=5.0, \mu_3=5.0, \mu_4=7.0, \mu_5=7.0, \mu_6=10.0, \mu_7=10.0 \}$
\end{itemize}

For each mean demand unit, we adapt three different standard deviation combinations: low, high, and mix. For each combination, the corresponding standard deviations are chosen according to the following distributions:  
\begin{itemize}
    \item For low standard deviation combination, we have: $\sigma_1-Poisson, \sigma_2-Binomial, \sigma_3-Poisson, \sigma_4-Binomial, \sigma_5-Poisson, \sigma_6-Binomial, \sigma_7-Poisson$.
    \item For high standard deviation combination, we have: $\sigma_1-Geometric, \sigma_2-NegativeBinomial, \sigma_3-Geometric, \sigma_4-NegativeBinomial, \sigma_5-Geometric, \sigma_6-NegativeBinomial, \sigma_7-Geometric$.
    \item For mixed standard deviation combination, we have: $\sigma_1-Poisson, \sigma_2-Geometric, \sigma_3-NegativeBinomial, \sigma_4-Binomial, \sigma_5-Poisson, \sigma_6-Geometric, \sigma_7-NegativeBinomial$.
\end{itemize}

We then fit a demand distribution by using the two moments as described by \citet{adan1995}.

\noindent \textbf{Instances of Case 3}

For Case 3, we consider the following demand distributions: for $K=1$, $\mu_1=5.0$; for $K=3$, $\mu_1=8.0, \mu_2=10.0, \mu_3=6.0 $; for $K=5$,  $\mu_1=11.0, \mu_2=3.0, \mu_3=5.0, \mu_4=8.0,\mu_5=6.0 $; for $K=7$, $ \mu_1=3.0, \mu_2=5.0, \mu_3=5.0, \mu_4=7.0, \mu_5=7.0, \mu_6=10.0, \mu_7=10.0 $. The corresponding standard deviations are chosen according to the following distributions: $\sigma_1-Poisson, \sigma_2-Geometric, \sigma_3-NegativeBinomial, \sigma_4-Binomial, \sigma_5-Poisson, \sigma_6-Geometric, \sigma_7-NegativeBinomial$. We then fit a demand distribution by using two moments as described by \citet{adan1995}.

When $l=0$, we have the following lead time probabilities $\mathbf{L}$:
\begin{itemize}
    \item $\mathbf{L} = \{0.0, 0.1, 0.18, 0.216, 0.2016, 0.1512, 0.09072, 0.042336, 0.0145152, 0.00326592, 0.00036288\}$
    \item $\mathbf{L} = \{0.0, 0.0, 0.2, 0.32, 0.288, 0.1536, 0.0384, 0.0, 0.0, 0.0, 0.0\}$
    \item $\mathbf{L} = \{0.0, 0.0, 0.0, 0.0, 0.1, 0.27, 0.378, 0.2268, 0.0252, 0.0, 0.0\}$
    \item $\mathbf{L} = \{0.0, 0.0, 0.0, 0.0, 0.0, 0.0, 0.0, 0.1, 0.27, 0.378, 0.252\}$
    \item $\mathbf{L} = \{0.0, 0.0, 0.0, 0.0, 0.0, 0.0, 0.2, 0.32, 0.288, 0.1536, 0.0384\}$
    \item $\mathbf{L} = \{0.0, 0.0, 0.0, 0.0, 0.05, 0.095, 0.171, 0.2052, 0.21546, 0.184338, 0.079002\}$
    \item $\mathbf{L} = \{0.0, 0.0, 0.0, 0.0, 0.1, 0.225, 0.3375, 0.253125, 0.0759375, 0.0084375, 0.0\}$
    \item $\mathbf{L} = \{0.05, 0.095, 0.171, 0.2052, 0.21546, 0.184338, 0.079002, 0.0, 0.0, 0.0, 0.0\}$
    \item $\mathbf{L} = \{0.0, 0.0, 0.0, 0.0, 0.0, 0.5, 0.25, 0.125, 0.0625, 0.03125, 0.03125\}$
    \item $\mathbf{L} = \{0.0, 0.0, 0.0, 0.3, 0.49, 0.189, 0.021, 0.0, 0.0, 0.0, 0.0\}$
\end{itemize}

When $l=1$, we have the following lead time probabilities $\mathbf{L}$:
\begin{itemize}
    \item $\mathbf{L} = \{ 0.0, 0.1, 0.1, 0.1, 0.1, 0.1, 0.1, 0.1, 0.1, 0.1, 0.1 \}$
    \item $\mathbf{L} = \{ 0.0, 0.0, 0.2, 0.2, 0.2, 0.2, 0.2, 0.0, 0.0, 0.0, 0.0 \}$
    \item $\mathbf{L} = \{ 0.0, 0.0, 0.0, 0.0, 0.1, 0.2, 0.3, 0.3, 0.1, 0.0, 0.0 \}$
    \item $\mathbf{L} = \{ 0.0, 0.0, 0.0, 0.0, 0.0, 0.0, 0.0, 0.1, 0.2, 0.3, 0.4 \}$
    \item $\mathbf{L} = \{ 0.0, 0.0, 0.0, 0.0, 0.0, 0.0, 0.2, 0.2, 0.2, 0.2, 0.2 \}$
    \item $\mathbf{L} = \{ 0.0, 0.0, 0.0, 0.0, 0.05, 0.05, 0.1, 0.1, 0.15, 0.25, 0.3 \}$
    \item $\mathbf{L} = \{ 0.05, 0.05, 0.1, 0.1, 0.15, 0.25, 0.3, 0.0, 0.0, 0.0, 0.0 \}$
    \item $\mathbf{L} = \{ 0.0, 0.0, 0.0, 0.0, 0.1, 0.15, 0.25, 0.25, 0.15, 0.1, 0.0 \}$
    \item $\mathbf{L} = \{ 0.0, 0.0, 0.0, 0.0, 0.0, 0.5, 0.0, 0.0, 0.0, 0.0, 0.5 \}$
    \item $\mathbf{L} = \{ 0.0, 0.0, 0.0, 0.3, 0.4, 0.2, 0.1, 0.0, 0.0, 0.0, 0.0 \}$
\end{itemize}

\vspace{-0.35cm}
\section{Per-Instance Training and Instance Selection}\label{sec:per-instance}
\vspace{-0.35cm}

We quantify the performance and operational trade-off when facing a choice between training and deploying a GCA with ZSG ability and training \emph{individual agents} tailored to fixed, known parameters. We therefore collect 35 instances spanning all three experimental cases and train individual DRL agents for each instance using the same neural network architecture and training pipeline as GC-LSN (with identical hyperparameters; only the number of training samples is adjusted to ensure per-instance convergence; the same hardware). 

For training individual DRL agents tailored to specific parameterizations, we employ the DCL algorithm \citep{temizoz2025deep}. We use the same hyperparameters reported in Table~\ref{tab:hyperparameters}, except for the number of samples, which is set to $N = 100000$. Note that the hyperparameters $R$ and $P$ are specific to the Super-DCL algorithm, which applies to training Super-MDPs and are therefore not considered here. Total training time for all agents was $\sim$ 6 hours, with each individual agent requiring less than 15 minutes to train.

The selected instances from the three experimental cases are as follows:

\textbf{Case 1:} We consider 16 problem instances using a full factorial design with the following parameter ranges: mean demand $\mu \in \{5.0, 10.0\}$, penalty cost $p \in \{9.0, 69.0\}$, and deterministic lead time $\in \{4, 8\}$. For each mean demand value, the standard deviation is selected so that the demand distribution follows either a Poisson or geometric distribution.

\textbf{Case 2:} We set $p=39.0$ and lead time to $6$. We consider 9 different instances with the same mean demands and varying cycle lengths reported in Appendix \ref{sec:instances}, where we employ \emph{mixed standard deviation combination}.  

\textbf{Case 3:} We set $p=69.0$, $K=3$ with mean demands $\mu_1=8.0, \mu_2=10.0, \mu_3=6.0 $ with Poisson, geometric, and negative binomial distributions, respectively. We consider 10 instances with varying $l$: $l=0$ and $l=1$. For both, we adopt the first five lead time distributions reported in Appendix \ref{sec:instances}.

We benchmark GC-LSN against these individual agents under known parameterizations. In summary, GC-LSN incurs a slightly higher cost on average, with a mean \emph{relative cost gap} of only $0.2\%$ versus per-instance agents under known parameterizations. Given this small gap, the retraining burden of per-instance models as tasks scale or parameters change, and the fact that per-instance training presumes known parameters (hence is infeasible when they are unknown), GC-LSN’s ZSG capability offers a practical advantage in settings with many tasks or frequent updates.

\vspace{-0.35cm}
\section{Restoring Lipschitz Property in Super-MDPs}\label{sec:keeplip}
\vspace{-0.35cm}

Consider a service level agreement that incurs a penalty $M>0$ whenever the (windowed) fill rate $\mathrm{FR}(\s,a,(\theta,\beta))$  falls below a threshold $\beta\in(0,1)$, where $\beta$ varies across items and is included in the parameterization vector $\Pars=(\theta,\beta)$, with $\theta$ denoting other parameters. The one-step cost is
\[
C^{(\theta,\beta)}(\s,a)\;=\;C^{\theta}(\s,a)\;+\;M\,\mathbf{1}\{\mathrm{FR}(\s,a,(\theta,\beta))<\beta\}.
\]
Fix $(\s,a,\theta)$ and choose $\beta'=\mathrm{FR}(\s,a,(\theta,\beta'))$ (e.g., by taking a state/action where the fill rate equals the threshold). Then, for any $\varepsilon>0$,
\[
\big|C^{(\theta,\beta')}(\s,a)-C^{(\theta,\beta'+\varepsilon)}(\s,a)\big| \;=\; M,\qquad d_{P}\big((\theta,\beta'),(\theta,\beta'+\varepsilon)\big)=\varepsilon,
\]
so $\sup_{\varepsilon\rightarrow 0}\, M/\varepsilon=\infty$: the mapping $p\mapsto C^{\Pars}$ is \emph{not} Lipschitz in $\beta$ (hence the Super–MDP is not Lipschitz) under any metric that treats $\beta$ as a standard continuous coordinate. Moreover, as $\beta\rightarrow 1$, achieving $\mathrm{FR}\ge \beta$ with probability close to one under demand models with unbounded support (e.g., Poisson, geometric) requires unbounded inventory levels; thus either the feasible state space or the cost remains unbounded when $\beta$ is near 1, which further undermines uniform continuity.

We can restore the Lipschitz property on a bounded domain in the following way. We treat $\beta$ as categorical (e.g., a finite grid $\{\beta_1,\ldots,\beta_Z\}$) and use one-hot encoding in $\Pars$; then cross-threshold pairs are never arbitrarily close, and $\big|C^{\Pars}-C^{\Pars'}\big|\le M\,\mathbf{1}\{\beta\neq\beta'\}\le M\,d_{P}(\Pars,\Pars')$, giving a finite Lipschitz constant.

If $\beta$ is truly continuous, however, then the probable parameter space $\hat{\mathcal P}$ can no longer be a strict superset of the true space $\mathcal P$: exact coverage would require infinitely many grid points. In practice, achieving adequate coverage of a continuous $\beta$ requires choosing a sufficiently fine discretization level. Larger $Z$ improves the approximation and reduces the looseness of the Lipschitz constant, but it also increases the dimensionality of the training space and hence the sampling budget needed to cover it uniformly. Thus, while discretization restores the Lipschitz property, it introduces a trade-off between fidelity to the true continuous space and computational feasibility.

\section{Ablation Experiment Results}\label{sec:ablation}

\begin{table}[ht]
   \centering
   \footnotesize
\begin{tabular}{r|r|r|r|r|r|r|r|r|r|}
    \cmidrule{2-10}
    \multicolumn{1}{c}{\multirow{1}[1]{*}{}} &\multicolumn{4}{|c|}{Case 1} & \multicolumn{3}{c|}{Case 2} & \multicolumn{2}{c|}{Case 3} \\
    \cmidrule{2-10}
    \multicolumn{1}{c|}{\multirow{1}[1]{*}{}} & \multicolumn{1}{c}{$p=9$} & \multicolumn{1}{c}{$p=39$} & \multicolumn{1}{c}{$p=69$} & \multicolumn{1}{c|}{$p=99$} & \multicolumn{1}{c}{$K=3$} & \multicolumn{1}{c}{$K=5$} & \multicolumn{1}{c|}{$K=7$} & \multicolumn{1}{c}{$l=0$} & \multicolumn{1}{c|}{$l=1$} \\
    \midrule
    GC-LSN & \multicolumn{1}{r}{$89.5\%$} & \multicolumn{1}{r}{$97.4\%$} & 
    \multicolumn{1}{r}{$98.5\%$} & 
    \multicolumn{1}{r|}{$99.0\%$} &
    \multicolumn{1}{r}{$93.8\%$} & 
    \multicolumn{1}{r}{$94.6\%$} &
    \multicolumn{1}{r|}{$94.2\%$} &
    \multicolumn{1}{r}{$93.5\%$} & 
    \multicolumn{1}{r|}{$94.6\%$} \\
   BSP& \multicolumn{1}{r}{$89.4\%$} & \multicolumn{1}{r}{$97.4\%$} & 
    \multicolumn{1}{r}{$98.6\%$} & 
    \multicolumn{1}{r|}{$99.0\%$} &
    \multicolumn{1}{r}{$93.8\%$} & 
    \multicolumn{1}{r}{$94.8\%$} &
    \multicolumn{1}{r|}{$94.3\%$} &
    \multicolumn{1}{r}{$93.7\%$} & 
    \multicolumn{1}{r|}{$94.9\%$} \\
  C-BSP & \multicolumn{1}{r}{$89.4\%$} & \multicolumn{1}{r}{$97.3\%$} & 
    \multicolumn{1}{r}{$98.5\%$} & 
    \multicolumn{1}{r|}{$99.0\%$} &
    \multicolumn{1}{r}{$93.8\%$} & 
    \multicolumn{1}{r}{$94.6\%$} &
    \multicolumn{1}{r|}{$94.3\%$} &
    \multicolumn{1}{r}{$93.5\%$} & 
    \multicolumn{1}{r|}{$95.0\%$} \\  
    \bottomrule    
\end{tabular}
    \caption{Service levels (fraction of demand met from stock) of policies.}
    \label{tab:servicelevels}
\end{table}

\begin{table}[ht]
   \centering
   \footnotesize
\begin{tabular}{r|r|r|r|r|r|r|r|r|r|}
    \cmidrule{2-10}
    \multicolumn{1}{c}{\multirow{1}[1]{*}{}} &\multicolumn{4}{|c|}{Case 1} & \multicolumn{3}{c|}{Case 2} & \multicolumn{2}{c|}{Case 3} \\
    \cmidrule{2-10}
    \multicolumn{1}{c|}{\multirow{1}[1]{*}{}} & \multicolumn{1}{c}{$p=9$} & \multicolumn{1}{c}{$p=39$} & \multicolumn{1}{c}{$p=69$} & \multicolumn{1}{c|}{$p=99$} & \multicolumn{1}{c}{$K=3$} & \multicolumn{1}{c}{$K=5$} & \multicolumn{1}{c|}{$K=7$} & \multicolumn{1}{c}{$l=0$} & \multicolumn{1}{c|}{$l=1$} \\
    \midrule
    GC-LSN & \multicolumn{1}{r}{$-0.9\%$} & \multicolumn{1}{r}{$-0.9\%$} & \multicolumn{1}{r}{$-0.7\%$} & \multicolumn{1}{r|}{$-0.5\%$} & \multicolumn{1}{r}{$-4.1\%$} & \multicolumn{1}{r}{$-4.1\%$} & \multicolumn{1}{r|}{$-4.4\%$} & \multicolumn{1}{r}{$-1.3\%$} & \multicolumn{1}{r|}{$-1.3\%$} \\
   GC-LSN-Small & \multicolumn{1}{r}{$-0.9\%$} & \multicolumn{1}{r}{$-1.0\%$} & \multicolumn{1}{r}{$-0.2\%$} & \multicolumn{1}{r|}{$2.9\%$} & \multicolumn{1}{r}{$-4.0\%$} & \multicolumn{1}{r}{$-4.0\%$} & \multicolumn{1}{r|}{$-4.5\%$} & \multicolumn{1}{r}{$-1.2\%$} & \multicolumn{1}{r|}{$-1.1\%$} \\
  GC-LSN-Large & \multicolumn{1}{r}{$-0.3\%$} & \multicolumn{1}{r}{$-0.7\%$} & \multicolumn{1}{c}{$-0.6\%$} & \multicolumn{1}{r|}{$-0.4\%$} & \multicolumn{1}{r}{$-3.5\%$} & \multicolumn{1}{r}{$-3.1\%$} & \multicolumn{1}{c|}{$-1.9\%$} & \multicolumn{1}{r}{$-1.0\%$} & \multicolumn{1}{r|}{$1.5\%$} \\   
    \bottomrule    
\end{tabular}
    \caption{Relative cost gap of GC-LSN policies compared to C-BSP - the lower the better}
    \label{tab:gc-lsn-pols}
\end{table}

\begin{table}[ht]
  \centering
  \begingroup
  \footnotesize
  \setlength{\tabcolsep}{3.5pt}
  \renewcommand{\arraystretch}{0.95}

  \begin{adjustbox}{max width=\textwidth}
  \begin{tabular}{|r|rrr|rrr|}
    \cmidrule{2-7}
    \multicolumn{1}{c|}{\multirow{1}[1]{*}{}}& $p=110$ & $p=125$ & $p=150$ & $\mu=13$ & $\mu=15$ & $\mu=20$ \\
    \midrule
    GC-LSN & $-0.4\%$ & $-0.2\%$ & $0.9\%$ & $-0.4\%$ & $0.1\%$ & $6.9\%$ \\
        GC-LSN-Large & $-0.4\%$ & $-0.3\%$ & $-0.2\%$ & $-0.5\%$ & $-0.4\%$ & $-0.2\%$ \\
    \bottomrule
  \end{tabular}
  \end{adjustbox}

  \caption{Relative cost gap of GC-LSN and GC-LSN-Large compared to C-BSP across selected parameters - lower is better.}
  \label{tab:sensitivity}
  \endgroup
\end{table}

\end{document}